\documentclass[accepted]{uai2024}
\usepackage{fancyhdr}
\pagestyle{fancy}
\fancyhf{}
\cfoot{\thepage}

\fancypagestyle{plain}{\fancyhf{}\cfoot{\thepage}}

\usepackage[british]{babel}

\usepackage[T1]{fontenc}

\usepackage{natbib} 
    \bibliographystyle{plainnat}
    
\usepackage{mathtools} 
\usepackage{booktabs} 
\usepackage{tikz} 

\usepackage{graphicx}
\usepackage{balance} 
\usepackage{tcolorbox}
\usepackage{alltt}
\usepackage{array}
\usepackage{amsmath}
\usepackage[whole]{bxcjkjatype}
\usepackage{comment}
\usepackage{bm}
\usepackage{amssymb}
\usepackage{cases}

\usepackage{tikz}
\usepackage{pgfplots}
\usepackage{amsmath,amssymb,amsfonts,amsthm}

\newtheorem{example}{Example}
\newtheorem{theorem}{Theorem}
\newtheorem{definition}{Definition}
\newtheorem{lemma}{Lemma}
\newtheorem{proposition}{Proposition}
\newtheorem{corollary}{Corollary}
\newcommand{\cent}{\mathrel{\scalebox{1}[1.5]{$\shortmid$}\mkern-3.1mu\raisebox{0.1ex}{$=$}}}
\newcommand{\ent}{\mathrel{\scalebox{1}[1.5]{$\shortmid$}\mkern-3.1mu\raisebox{0.1ex}{$\equiv$}}}

\usepackage{stmaryrd}
\usepackage{bm}
\usepackage[whole]{bxcjkjatype}
\usepackage{cancel}

\newcommand{\m}[1]{[\![#1]\!]}
\newcommand{\ms}[1]{[\![#1]\!]}
\newcommand{\pms}[1]{[\![\![#1]\!]\!]}
\newcommand{\ams}[1]{(\!(#1)\!)}
\newcommand{\pams}[1]{(\!(\!(#1)\!)\!)}


\title{Inference of Abstraction for a Unified Account of Symbolic Reasoning from Data}
%
%
\author[ ]{Hiroyuki Kido}
\affil[ ]{%
    Cardiff University\\
    Park Place, Cardiff, CF10 3AT, UK
}

\begin{document}
\maketitle
\begin{abstract}
Inspired by empirical work in neuroscience for Bayesian approaches to brain function, we give a unified probabilistic account of various types of symbolic reasoning from data. We characterise them in terms of formal logic using the classical consequence relation, an empirical consequence relation, maximal consistent sets, maximal possible sets and maximum likelihood estimation. The theory gives new insights into reasoning towards human-like machine intelligence.
\end{abstract}
\section{Introduction}\label{sec:introduction}
There is growing evidence that the brain is a generative model of environments. The image shown in Figure \ref{fig:illusions} would cause the perception that a white triangle overlays the other objects. A well-accepted explanation of the illusion is that our brains are trained to unconsciously use past experience to see what is likely to happen. Much empirical work argues that Bayesian, or probabilistic generative, models give a clear explanation of how the brain reconciles top-down prediction signals and bottom-up sensory signals, e.g., \citep{Helmholtz:25,Knill:96,Gregory:97,Knill:04,lee:03,Hohwy:14,friston:10,Rao:99,Rao:05,Caucheteux:23,Itti:09,Adams:16,Pellicano:12,Tenenbaum:06}.
\par
An interesting question emerging from this idea is how logical consequence relations, relevant to human higher-order thinking, can be given a Bayesian account. The question is important for the following reasons. First, such an account should result in a simple computational principle that allows logical agents to reason over symbolic knowledge fully from data in an uncertain environment. Second, such a principle is expected to tackle fundamental assumptions of the existing computational models such as statistical relational learning (SRL) \citep{Getoor:07}, Bayesian networks \citep{pearl:88}, naive Bayes, probabilistic logic programming (PLP) \citep{sato:95}, Markov logic networks (MLN) \citep{richardson:06}, probabilistic logic \citep{nilsson:86}, probabilistic relational models (PRM) \citep{friedman:99} and conditional probabilistic logic \citep{Rodder:00}. For example, they have the implicit assumption that the method used to extract symbolic knowledge from data cannot be applied to the method used to perform logical reasoning over the symbolic knowledge, and vice versa.
\par
In this paper, we simply model how data cause symbolic knowledge in terms of its satisfiability in formal logic. The underlying idea is to see reasoning as a process of deriving symbolic knowledge from data via abstraction, i.e., selective ignorance. We show that various types of well-grounded symbolic reasoning emerge from direct interaction between data and symbols, not between symbols. We theoretically characterise them in terms of formal logic.
\par
This paper contributes to new insights into reasoning towards human-like machine intelligence. Symbolic reasoning is essentially a reference to data in our theory. It thus brings up an idea of inference grounding rather than or beyond symbol grounding. Symbolic reasoning can also be seen as interaction between an interpretation in formal logic and its inversion. The inversion, we call inverse interpretation, differentiates our work from the mainstream referred to as inverse entailment \citep{Muggleton:95}, inverse resolution \citep{Muggleton:88,Nienhuys:97} and inverse deduction \citep{Russell:20}, which mainly study dependency between pieces of symbolic knowledge. Our analysis causes reasoning from an impossible source of information, which may be coined as parapossible reasoning, since reasoning from an inconsistent source of information is often referred to as paraconsistent reasoning \citep{Priest:02,Carnielli:07}.
\par
In Section 2, we define a generative reasoning model for inference of abstraction. Section 3 gives full logical characterisations of the theory. Section 4 summarises the results.
\begin{figure}[t]
\begin{center}
 \includegraphics[scale=0.25]{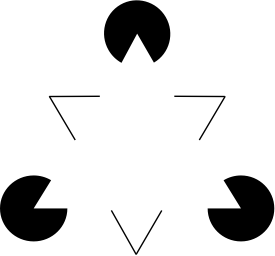}
    \caption{The Kanizsa triangle. Adapted from Kanizsa \citep{Kanizsa:55}.}
  \label{fig:illusions}
  \end{center}
\end{figure}

\section{Inference of Abstraction}
Let $\{d_{1},d_{2},...,d_{K}\}$ be a multiset of $K$ data. $D$ denotes a random variable of data whose values are all the elements of $\{d_{1},d_{2},...,d_{K}\}$. For all data $d_{k} (1\leq k\leq K)$, we define the probability of $d_{k}$, denoted by $p(D=d_{k})$, as follows. 
\begin{eqnarray*}
\textstyle{p(D=d_{k})=\frac{1}{K}}
\end{eqnarray*}
\par
Let $L$ represent a propositional language for simplicity, and $\{m_{1},m_{2},...,m_{N}\}$ be the set of models of $L$. A model is an assignment of truth values to all the atomic formulas in $L$. Intuitively, each model represents a different state of the world. We assume that each data $d_{k}$ supports a single model. We thus use a function $m$, $\{d_{1},d_{2},...,d_{K}\}\to\{m_{1},m_{2},...,m_{N}\}$, to map each data to the model supported by the data. $M$ denotes a random variable of models whose realisations are all the elements of $\{m_{1},m_{2},...,m_{N}\}$. For all models $m_{n} (1\leq n\leq N)$, we define the probability of $m_{n}$ given $d_{k}$, denoted by $p(M=m_{n}|D=d_{k})$, as follows.
{\small
\begin{eqnarray*}
&&p(M=m_{n}|D=d_{k})=
\begin{cases}
1 & \text{if } m_{n}=m(d_{k})\\
0 & \text{otherwise}
\end{cases}
\end{eqnarray*}
}
\par
The truth value of a propositional formula and first-order closed formula in classical logic is uniquely determined in a state of the world specified by a model of a language. Let $\alpha$ be a formula in $L$. We assume that $\alpha$ is a random variable whose realisations are 0 and 1 meaning false and true respectively. We use symbol $\ms{\alpha}$ to refer to the models of $\alpha$. Namely, $\ms{\alpha=1}$ and $\ms{\alpha=0}$ represent the set of models in which $\alpha$ is true and false, respectively. Let $\mu\in[0,1]$ be a variable, not a random variable. For all formulas $\alpha\in L$, we define the probability of each truth value of $\alpha$ given $m_{n}$, denoted by $p(\alpha|M=m_{n})$, as follows.
{\small
\begin{eqnarray*}
&&p(\alpha=1|M=m_{n})=
\begin{cases}
\mu & \text{if } m_{n}\in\llbracket\alpha=1\rrbracket\\
1-\mu & \text{otherwise }
\end{cases}
\\
&&p(\alpha=0|M=m_{n})=
\begin{cases}
\mu & \text{if } m_{n}\in\llbracket\alpha=0\rrbracket\\
1-\mu & \text{otherwise }
\end{cases}
\end{eqnarray*}
}
The above expressions can be simply written as a Bernoulli distribution with parameter $\mu\in[0,1]$, i.e.,
{\small
\begin{eqnarray*}
p(\alpha|M=m_{n})=\mu^{\ms{\alpha}_{m_{n}}}(1-\mu)^{1-\ms{\alpha}_{m_{n}}}.
\end{eqnarray*}
}
Here, the variable $\mu\in[0,1]$ plays an important role to relate various types of symbolic reasoning. We will see that $\mu=1$ relates to the classical consequence relation and its generalisation, and $\mu$ approaching 1, denoted by $\mu\to 1$, relates to its generalisation for reasoning from inconsistent sources of information and its further generalisation. We discuss them in the next section.
\par
In classical logic, given a model, the truth value of a formula does not change the truth value of another formula. Thus, in probability theory, the truth value of a formula $\alpha_{1}$ is conditionally independent of the truth value of another formula $\alpha_{2}$ given a model $M$, i.e., $p(\alpha_{1}|\alpha_{2},M,D)=p(\alpha_{1}|M,D)$ or equivalently $p(\alpha_{1},\alpha_{2}|M,D)=p(\alpha_{1}|M,D)p(\alpha_{2}|M,D)$. We therefore have
{\small
\begin{align}
\textstyle{p(L|M,D)=\prod_{\alpha\in L}p(\alpha|M,D).}\label{eq:1}
\end{align}
}
Moreover, in classical logic, the truth value of a formula depends on models but not data. Thus, in probability theory, the truth value of a formula $\alpha$ is conditionally independent of data $D$ given a model $M$, i.e., $p(\alpha|M,D)=p(\alpha|M)$. We thus have
{\small
\begin{align}
\textstyle{\prod_{\alpha\in L}p(\alpha|M,D)=\prod_{\alpha\in L}p(\alpha|M).}\label{eq:2}
\end{align}
}
Therefore, the full joint distribution, $p(L, M, D)$, can be written as follows.
{\small
\begin{align}
p(L, M, D)&=p(L|M, D)p(M|D)p(D)\nonumber\\
&\textstyle{=\prod_{\alpha\in L}p(\alpha|M)p(M|D)p(D)}\label{eq:3}
\end{align}
}
Here, the product rule (or chain rule) of probability theory is applied in the first equation, and Equations (\ref{eq:1}) and (\ref{eq:2}) in the second equation. As will be seen later, the joint distribution $p(L, M, D)$ is a probabilistic model of symbolic reasoning from data. We call the joint distribution a generative reasoning model for short. We often represent $p(L, M, D)$ as $p(L, M, D;\mu)$ if our discussion is relevant to $\mu$. We use symbol `$;$' to make sure that $\mu$ is a variable, but not a random variable. In this paper, we assume a finite number of realisations of each random variable.
\par
The full joint distribution implies that we can no longer discuss only the probabilities of individual formulas, but they are derived from data. For example, the probability of $\alpha\in L$ is calculated as follows.
{\small
\begin{align}
p(\alpha)&=\sum_{m}\sum_{d}p(\alpha,m,d)=\sum_{m}p(\alpha|m)\sum_{d}p(m|d)p(d)\label{eq:4}
\end{align}
}
Here, the sum rule of probability theory is applied in the first equation, and Equation (\ref{eq:3}) in the second equation.
\begin{proposition}\label{negation}
Let $p(L,M,D;\mu)$ be a generative reasoning model. For all $\alpha\in L$, $p(\alpha=0)=p(\neg\alpha=1)$ holds.
\end{proposition}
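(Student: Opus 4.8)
The plan is to push everything through Equation~(\ref{eq:4}), which already expresses $p(\alpha)$ as a weighted sum over models of the conditional probabilities $p(\alpha\mid m)$ with weights $\sum_{d}p(m\mid d)p(d)$ that do not depend on the formula. Applying that identity once with the formula $\alpha$ instantiated to the random variable event $\alpha=0$, and once with it instantiated to $\neg\alpha=1$, reduces the claim $p(\alpha=0)=p(\neg\alpha=1)$ to the pointwise statement $p(\alpha=0\mid M=m_{n})=p(\neg\alpha=1\mid M=m_{n})$ for every model $m_{n}$. So the whole proposition collapses to a fact about the model-conditional Bernoulli parameters.

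Next I would unfold the definition of $p(\cdot\mid M=m_{n})$ for each side. By definition $p(\alpha=0\mid M=m_{n})$ equals $\mu$ when $m_{n}\in\llbracket\alpha=0\rrbracket$ and $1-\mu$ otherwise, while $p(\neg\alpha=1\mid M=m_{n})$ equals $\mu$ when $m_{n}\in\llbracket\neg\alpha=1\rrbracket$ and $1-\mu$ otherwise. Hence the two conditionals agree at $m_{n}$ precisely when $m_{n}$ lies in $\llbracket\alpha=0\rrbracket$ if and only if it lies in $\llbracket\neg\alpha=1\rrbracket$; i.e.\ it suffices to show the set identity $\llbracket\alpha=0\rrbracket=\llbracket\neg\alpha=1\rrbracket$.

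That set identity is just the semantic clause for classical negation: a model falsifies $\alpha$ exactly when it satisfies $\neg\alpha$, which is how $\ms{\cdot}$ was introduced in the text (the notation $\ms{\alpha=1}$, $\ms{\alpha=0}$ for the models making $\alpha$ true, resp.\ false, together with the fact that truth values of closed formulas are uniquely determined in each model). Invoking this, the two model-conditional distributions coincide for every $m_{n}$, and substituting back into Equation~(\ref{eq:4}) gives $p(\alpha=0)=p(\neg\alpha=1)$, independently of $\mu$ and of the data distribution.

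I do not expect a genuine obstacle here: the argument is a one-line semantic observation wrapped in a substitution into an already-derived marginalisation formula. The only point needing a little care is making explicit that the Bernoulli definition of $p(\,\cdot\mid M=m_{n})$ was stated for \emph{all} formulas of $L$, so it legitimately applies to $\neg\alpha$ as well as to $\alpha$; once that is noted, no calculation beyond rewriting the piecewise cases is required.
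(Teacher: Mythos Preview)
Your proposal is correct and mirrors the paper's own proof: both reduce the claim via the marginalisation over models (Equation~(\ref{eq:4})) to the pointwise identity $p(\alpha=0\mid m)=p(\neg\alpha=1\mid m)$, and then discharge that by the semantic fact $\ms{\alpha=0}=\ms{\neg\alpha=1}$. The only cosmetic difference is that the paper writes the conditional likelihood in its Bernoulli form $\mu^{\ms{\cdot}_{m}}(1-\mu)^{1-\ms{\cdot}_{m}}$ rather than the piecewise cases you spell out, but the argument is the same.
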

\begin{proof}
For all models $m$, $\alpha$ is false in $m$ if and only if $\lnot\alpha$ is true in $m$. Thus, $\ms{\alpha=0}=\ms{\lnot\alpha=1}$ is the case. Therefore,
\begin{align*}
p(\alpha=0)&\textstyle{=\sum_{m}p(\alpha=0|m)p(m)}\\
&\textstyle{=\sum_{m}\mu^{\ms{\alpha=0}_{m}}(1-\mu)^{1-\ms{\alpha=0}_{m}}p(m)}\\
&\textstyle{=\sum_{m}\mu^{\ms{\lnot\alpha=1}_{m}}(1-\mu)^{1-\ms{\lnot\alpha=1}_{m}}p(m)}\\
&\textstyle{=\sum_{m}p(\lnot\alpha=1|m)p(m)=p(\lnot \alpha=1)}.
\end{align*}
This holds regardless of the value of $\mu$.
\end{proof}
Hereinafter, we replace $\alpha=0$ by $\lnot\alpha=1$ and abbreviate $\lnot\alpha=1$ to $\lnot\alpha$. We also abbreviate $M=m_{n}$ to $m_{n}$ and $D=d_{k}$ to $d_{k}$. 
\par
The hierarchy shown in Figure \ref{fig:hierarchy} illustrates Equation (\ref{eq:4}). The top layer of the hierarchy is a probability distribution over data, the middle layer is a probability distribution over states of the world, often referred to as models in formal logic, and the bottom layer is a probability distribution over a logical formula $\alpha$. A darker colour indicates a higher probability. Each element of a lower layer is an abstraction, selective ignorance, of the linked element of the upper.
\begin{figure}[t]
\begin{center}
 \includegraphics[scale=0.22]{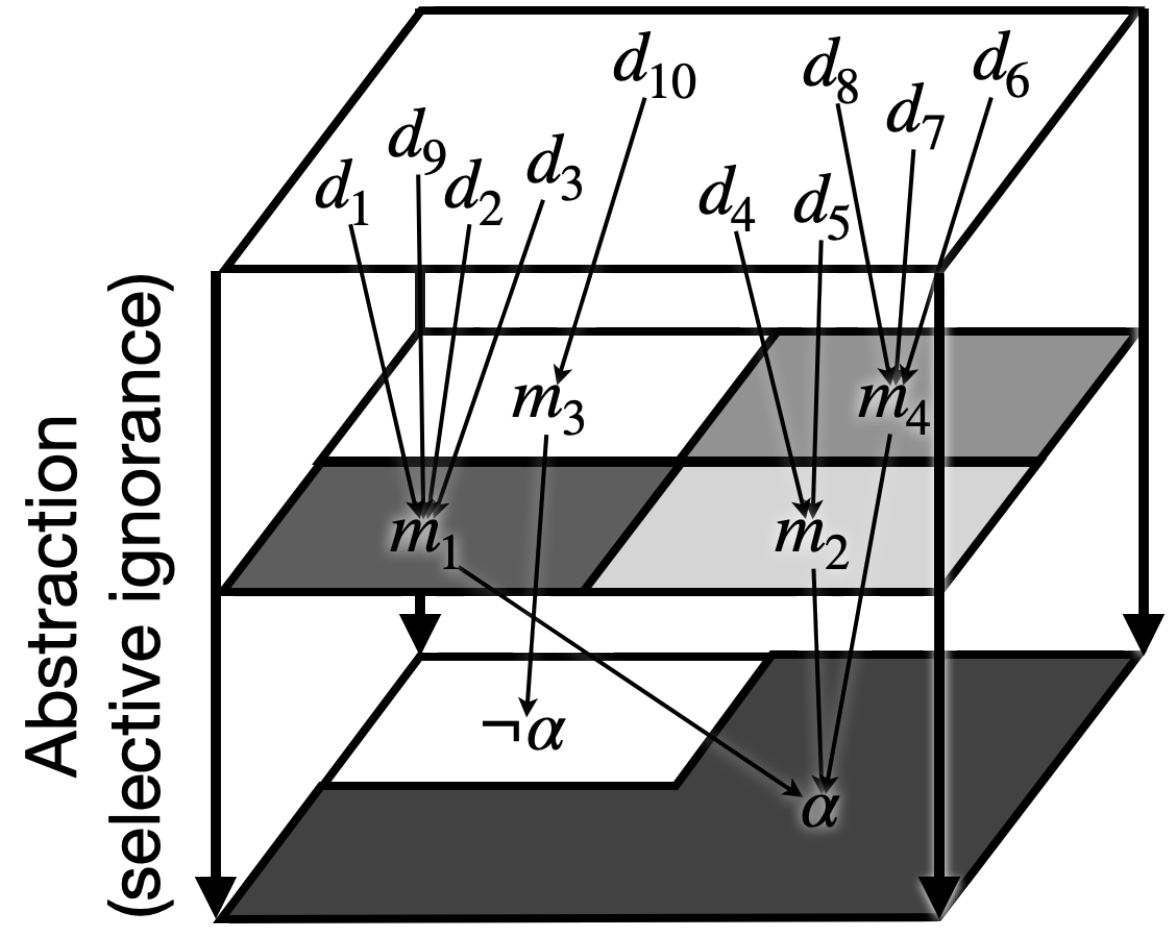}
  \caption{A schematic of how the probability distribution over data determines the probability distribution over logical formulas. For simplicity, an arrow is omitted if the formula at the end of the arrow is false in the model at the start of the arrow and if the model at the end of the arrow is not supported by the data at the start of the arrow.}
  \label{fig:hierarchy}
  \end{center}
\end{figure}
\begin{example}
Let $L$ be a propositional language built with two symbols, $rain$ and $wet$, meaning `rain falls' and `the road gets wet,' respectively. Let $m_{n}(1\leq n\leq 4)$ be the models of $L$ and $d_{k}(1\leq k\leq 10)$ be data about rain and road conditions. Table \ref{tab:hierarchy} shows which data support which models and which models specify which states of the world. The probability of $rain\to wet$ can be calculated using Equation (\ref{eq:4}) as follows.
\begin{align*}
&\textstyle{p(rain\to wet)}\\
&\textstyle{=\sum_{n=1}^{4}p(rain\to wet|m_{n})\sum_{k=1}^{10}p(m_{n}|d_{k})p(d_{k})}\\
&\textstyle{=\mu\sum_{k=1}^{10}p(m_{1}|d_{k})\frac{1}{10}+\mu\sum_{k=1}^{10}p(m_{2}|d_{k})\frac{1}{10}}\\
&\textstyle{~~~~~+(1-\mu)\sum_{k=1}^{10}p(m_{3}|d_{k})\frac{1}{10}+\mu\sum_{k=1}^{10}p(m_{4}|d_{k})\frac{1}{10}}\\
&\textstyle{=\frac{4}{10}\mu+\frac{2}{10}\mu+\frac{1}{10}(1-\mu)+\frac{3}{10}\mu=\frac{1}{10}+\frac{8}{10}\mu}
\end{align*}
Therefore, $p(rain\to wet)=9/10$ when $\mu=1$ or $\mu\to 1$, i.e., $\mu$ approaching 1. Figure \ref{fig:hierarchy} illustrates the calculation and visualises how the probability of $rain\to wet (=\alpha)$ is derived from data.
\end{example}
\begin{table}[t]
\centering
\caption{An example of Figure \ref{fig:hierarchy}. From the left, each column shows data, models and the likelihood of the formula.}
\label{tab:hierarchy}
\begin{tabular}{c|ccc|c}
$D$ & $M$ & $rain$ & $wet$  & $p(rain\to wet|M)$\\\hline
$d_{1},d_{2},d_{3},d_{9}$ & $m_{1}$ & 0 & 0 & $\mu$\\
$d_{4},d_{5}$ & $m_{2}$ & 0 & 1 & $\mu$\\
$d_{10}$ & $m_{3}$ & 1 & 0 & $1-\mu$\\
$d_{6},d_{7},d_{8}$ & $m_{4}$ & 1 & 1 & $\mu$
\end{tabular}
\end{table}
\section{Correctness}
\subsection{Logical reasoning}\label{sec:consistency}
In the previous section, we saw that the probabilities of models and formulas are derived from data. As a result, the probability of a model without support from data and the probability of a formula satisfied only by such models both turn out to be zero. We refer to such models and formulas as being impossible.
\begin{definition}[Possibility]
Let $m$ be a model associated with $L$. $m$ is possible if $p(m)\neq 0$ and impossible otherwise.
\end{definition}
For $\Delta\subseteq L$, we use symbol $\pms{\Delta}$ to denote the set of all the possible models of $\Delta$, i.e., $\pms{\Delta}=\{m\in\ms{\Delta}|p(m)\neq 0\}$. We also use symbol $\pms{\Delta}_{m}$ such that $\pms{\Delta}_{m}=1$ if $m\in\pms{\Delta}$ and $\pms{\Delta}_{m}=0$ otherwise. Obviously, $\pms{\Delta}\subseteq\ms{\Delta}$, for all $\Delta\subseteq L$, and $\pms{\Delta}=\ms{\Delta}$ if all models are possible. If $\Delta$ is inconsistent, $\pms{\Delta}=\ms{\Delta}=\emptyset$. If $\Delta$ is an empty set or if it only includes tautologies then every model satisfies all the formulas in the possibly empty $\Delta$, and thus $\ms{\Delta}$ includes all the models.
\par
In this section, we look at generative reasoning models with $\mu=1$, $p(L,M,D;\mu=1)$, for reasoning from a consistent source of information.  The following theorem relates the probability of a formula to the probability of its models.
\begin{theorem}\label{thrm:consistency}
Let $p(L,M,D;\mu=1)$ be a generative reasoning model, and $\alpha\in L$ and $\Delta\subseteq L$ such that $\ms{\Delta}=\pms{\Delta}$.
\begin{align*}
p(\alpha|\Delta)=
\begin{cases}
\displaystyle{\frac{\sum_{m\in\ms{\Delta}\cap\ms{\alpha}}p(m)}{\sum_{m\in\ms{\Delta}}p(m)}}&\text{if }\ms{\Delta}\neq\emptyset\\
\text{undefined}&\text{otherwise}
\end{cases}
\end{align*}
\end{theorem}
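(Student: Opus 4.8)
The plan is to reduce everything to the definition of conditional probability and then read off the two required joint probabilities from the factorised generative model of Equations~(\ref{eq:1})--(\ref{eq:4}) specialised to $\mu=1$. Conditioning on $\Delta$ means conditioning on the joint event that every formula in $\Delta$ takes the value $1$, so by definition $p(\alpha\mid\Delta)=p(\alpha,\Delta)/p(\Delta)$ provided $p(\Delta)\neq 0$; the whole argument therefore comes down to evaluating $p(\Delta)$ and $p(\alpha,\Delta)$ and deciding when the denominator is zero.

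First I would compute $p(\Delta)$. Marginalising over models and data and using the chain rule together with the conditional independences already established, the same factorisation that gives Equation~(\ref{eq:3}) yields $p(\Delta\mid m)=\prod_{\beta\in\Delta}p(\beta\mid m)$ and hence $p(\Delta)=\sum_{m}\bigl(\prod_{\beta\in\Delta}p(\beta\mid m)\bigr)p(m)$, where $p(m)=\sum_{d}p(m\mid d)p(d)$ as in Equation~(\ref{eq:4}). Setting $\mu=1$ in the Bernoulli form $p(\beta\mid m)=\mu^{\ms{\beta}_{m}}(1-\mu)^{1-\ms{\beta}_{m}}$ makes $p(\beta=1\mid m)$ equal to $1$ when $m\in\ms{\beta}$ and $0$ otherwise, so the product over $\beta\in\Delta$ equals $1$ exactly when $m$ satisfies every formula of $\Delta$, i.e.\ when $m\in\ms{\Delta}=\bigcap_{\beta\in\Delta}\ms{\beta}$, and $0$ otherwise. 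The sum therefore collapses to $p(\Delta)=\sum_{m\in\ms{\Delta}}p(m)$. Repeating the computation with the extra factor $p(\alpha\mid m)$, which is likewise $1$ on $\ms{\alpha}$ and $0$ off it, gives $p(\alpha,\Delta)=\sum_{m\in\ms{\Delta}\cap\ms{\alpha}}p(m)$, and dividing produces the claimed ratio.

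It then remains to settle well-definedness, and this is exactly where the hypothesis $\ms{\Delta}=\pms{\Delta}$ is used. If $\ms{\Delta}=\emptyset$ then $p(\Delta)=0$ and $p(\alpha\mid\Delta)$ is undefined, matching the second case of the statement. If $\ms{\Delta}\neq\emptyset$, then $\ms{\Delta}=\pms{\Delta}$ forces $\ms{\Delta}$ to contain at least one possible model, so $\sum_{m\in\ms{\Delta}}p(m)>0$ and the quotient is well-defined; note that without this hypothesis a nonempty $\ms{\Delta}$ consisting entirely of impossible models would make the denominator vanish, so the assumption is genuinely needed.

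I do not anticipate a real obstacle; the points requiring care are purely bookkeeping: treating the conditioning set $\Delta$ as a single joint event, being explicit that the factorisations invoked are precisely Equations~(\ref{eq:1})--(\ref{eq:2}) rather than an unjustified independence assumption, and noting the harmless degenerate case $\Delta=\emptyset$, where $\ms{\Delta}$ is the set of all models, $\sum_{m}p(m)=1$, and the formula correctly collapses to $p(\alpha)=\sum_{m\in\ms{\alpha}}p(m)$ of Equation~(\ref{eq:4}).
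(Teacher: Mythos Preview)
Your proposal is correct and follows essentially the same route as the paper: expand $p(\alpha\mid\Delta)$ through the model layer, use the Bernoulli form with $\mu=1$ so that $p(\Delta\mid m)$ becomes an indicator of $m\in\ms{\Delta}$, and then read off the ratio. Your treatment is in fact slightly more careful than the paper's, since you explain explicitly why the hypothesis $\ms{\Delta}=\pms{\Delta}$ is needed to rule out a nonempty $\ms{\Delta}$ whose models all have zero mass, a point the paper's own proof leaves implicit.
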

\begin{proof}
Let $|\Delta|$ denote the cardinality of $\Delta$. Dividing models into the ones satisfying all the formulas in $\Delta$ and the others, we have
{\small
\begin{align*}
&p(\alpha|\Delta)=\frac{\sum_{m}p(\alpha|m)p(\Delta|m)p(m)}{\sum_{m}p(\Delta|m)p(m)}\\
&=\frac{\displaystyle{\sum_{m\in\llbracket\Delta\rrbracket}p(m)p(\alpha|m)\mu^{|\Delta|}+\sum_{m\notin\llbracket\Delta\rrbracket}p(m)p(\alpha|m)p(\Delta|m)}}{\displaystyle{\sum_{m\in\llbracket\Delta\rrbracket}p(m)\mu^{|\Delta|}+\sum_{m\notin\llbracket\Delta\rrbracket}p(m)p(\Delta|m)}}.
\end{align*}
}
By definition, $p(\Delta|m)=\prod_{\beta\in\Delta}p(\beta|m)=\prod_{\beta\in\Delta}\mu^{\ms{\beta}_{m}}(1-\mu)^{1-{\ms{\beta}_{m}}}$. For all $m\notin\llbracket\Delta\rrbracket$, there is $\beta\in\Delta$ such that $\ms{\beta}_{m}=0$. Therefore, $p(\Delta|m)=0$ when $\mu=1$, for all $m\notin\llbracket\Delta\rrbracket$. We thus have
{\small
\begin{align*}
p(\alpha|\Delta)=\frac{\displaystyle{\sum_{m\in\llbracket\Delta\rrbracket}p(m)p(\alpha|m)1^{|\Delta|}}}{\displaystyle{\sum_{m\in\llbracket\Delta\rrbracket}p(m)1^{|\Delta|}}}=\frac{\displaystyle{\sum_{m\in\llbracket\Delta\rrbracket}p(m)1^{\ms{\alpha}_{m}}0^{1-\ms{\alpha}_{m}}}}{\displaystyle{\sum_{m\in\llbracket\Delta\rrbracket}p(m)}}.
\end{align*}
}
We obtain the theorem, since $1^{\ms{\alpha}_{m}}0^{1-\ms{\alpha}_{m}}=1^{1}0^{0}=1$ if $m\in\ms{\alpha}$ and $1^{\ms{\alpha}_{m}}0^{1-\ms{\alpha}_{m}}=1^{0}0^{1}=0$ if $m\notin\ms{\alpha}$. In addition, if $\ms{\Delta}=\emptyset$ then $p(\alpha|\Delta)$ is undefined due to division by zero.
\end{proof}
Recall that a formula $\alpha$ is a logical consequence of a set $\Delta$ of formulas, denoted by $\Delta\cent\alpha$, in classical logic iff (if and only if) $\alpha$ is true in every model in which $\Delta$ is true, i.e., $\ms{\Delta}\subseteq\ms{\alpha}$. The following Corollary shows the relationship between the generative reasoning model $p(L,M,D;\mu=1)$ and the classical consequence relation $\cent$.
\begin{corollary}\label{cor:consistent_reasoning}
Let $p(L,M,D;\mu=1)$ be a generative reasoning model, and $\alpha\in L$ and $\Delta\subseteq L$ such that $\ms{\Delta}=\pms{\Delta}$ and $\ms{\Delta}\neq\emptyset$. $p(\alpha|\Delta)=1$ iff $\Delta\cent\alpha$.
\end{corollary}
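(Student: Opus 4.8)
The plan is to read the Corollary straight off Theorem~\ref{thrm:consistency} by asking when the closed-form ratio equals $1$. Since $\ms{\Delta}\neq\emptyset$ by hypothesis, Theorem~\ref{thrm:consistency} applies and gives
\[
p(\alpha|\Delta)=\frac{\sum_{m\in\ms{\Delta}\cap\ms{\alpha}}p(m)}{\sum_{m\in\ms{\Delta}}p(m)}.
\]
First I would record that the denominator is strictly positive: the hypothesis $\ms{\Delta}=\pms{\Delta}$ says every $m\in\ms{\Delta}$ is possible, hence $p(m)\neq 0$, and as $\ms{\Delta}$ is non-empty and probabilities are non-negative, $\sum_{m\in\ms{\Delta}}p(m)>0$. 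So the ratio is well defined and lies in $[0,1]$, and $p(\alpha|\Delta)=1$ is equivalent to the numerator equalling the denominator.

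For the ``if'' direction, assume $\Delta\cent\alpha$. By the definition of the classical consequence relation recalled just before the Corollary, this means $\ms{\Delta}\subseteq\ms{\alpha}$, hence $\ms{\Delta}\cap\ms{\alpha}=\ms{\Delta}$; numerator and denominator then coincide and $p(\alpha|\Delta)=1$. For the ``only if'' direction, assume $p(\alpha|\Delta)=1$. Clearing the positive denominator yields $\sum_{m\in\ms{\Delta}\cap\ms{\alpha}}p(m)=\sum_{m\in\ms{\Delta}}p(m)$, i.e.\ $\sum_{m\in\ms{\Delta}\setminus\ms{\alpha}}p(m)=0$. Each summand here is non-negative, so every summand is $0$; but each $m\in\ms{\Delta}=\pms{\Delta}$ has $p(m)\neq 0$, forcing $\ms{\Delta}\setminus\ms{\alpha}=\emptyset$, that is $\ms{\Delta}\subseteq\ms{\alpha}$, which is exactly $\Delta\cent\alpha$.

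The one place where the argument is not pure bookkeeping — and where the assumption $\ms{\Delta}=\pms{\Delta}$ does the real work — is this last implication: without it, $\ms{\Delta}$ could contain impossible models, and $p(\alpha|\Delta)=1$ would only guarantee that $\ms{\alpha}$ covers the \emph{possible} models of $\Delta$, not all of them, so the equivalence with $\cent$ would fail. I therefore expect no genuine obstacle; the proof is essentially an application of Theorem~\ref{thrm:consistency} together with the positivity of $p(m)$ on possible models and the fact that a finite sum of non-negative terms vanishes only when every term does.
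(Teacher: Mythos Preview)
Your proof is correct and follows essentially the same approach as the paper: both invoke Theorem~\ref{thrm:consistency}, use $\ms{\Delta}=\pms{\Delta}$ and $\ms{\Delta}\neq\emptyset$ to ensure every $p(m)$ in the sums is strictly positive, and then read off that the ratio equals $1$ iff $\ms{\Delta}\subseteq\ms{\alpha}$. Your version spells out the two directions and the vanishing-sum step more explicitly, but there is no substantive difference.
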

\begin{proof}
By the assumptions $\ms{\Delta}=\pms{\Delta}$ and $\ms{\Delta}\neq\emptyset$, $p(m)$ is non zero, for all $m$ in the non-empty set $\ms{\Delta}$. The assumptions thus prohibit a division by zero in Theorem \ref{thrm:consistency}. Therefore, $\frac{\sum_{m\in\ms{\Delta}\cap\ms{\alpha}}p(m)}{\sum_{m\in\ms{\Delta}}p(m)}=1$ iff $\ms{\alpha}\supseteq\ms{\Delta}$, i.e., $\Delta\models\alpha$.
\end{proof}
The following example shows the importance of the assumptions of $\ms{\Delta}=\pms{\Delta}$ and $\ms{\Delta}\neq\emptyset$ in Corollary \ref{cor:consistent_reasoning}.
\begin{table}[t]
\begin{center}
\caption{Some inconsistencies between generative reasoning and classical reasoning.}
\label{ex:classical_consequence}
\scalebox{0.8}{
\begin{tabular}{l|l|l}\label{table:consistent}
Generative reasoning & Classical reasoning & Rationale\\\hline
$p(wet|rain,\lnot rain)\neq1$ & $rain,\lnot rain\cent wet$ & $\ms{rain,\lnot rain}=\emptyset$\\
$p(wet|rain)=1$ & $rain\not\cent wet$ & $\ms{rain}\neq\pms{rain}$\\
$p(\lnot rain\lor wet)=1$ & $\not\cent \lnot rain\lor wet$ & $\ms{\emptyset}\neq\pms{\emptyset}$
\end{tabular}
}
\end{center}
\end{table}
\begin{example}
Suppose that the probability distribution in Table \ref{tab:hierarchy} is given by $p(M)=(m_{1},m_{2},m_{3},m_{4})=(0.5,0.2,0,0.3)$. Table \ref{ex:classical_consequence} exemplifies differences between the generative reasoning and classical consequence relation. The last column explains why the generative reasoning is inconsistent with the classical consequence. In particular, the rationale of the last example comes from the fact that Theorem \ref{thrm:consistency} explains $p(\lnot rain\lor wet)$ as $p(\lnot rain\lor wet|\emptyset)$.
\begin{align*}
&p(\lnot rain\lor wet)=p(\lnot rain\lor wet|\emptyset)\\
&=\frac{\sum_{m\in\ms{\emptyset}\cap\ms{\lnot rain\lor wet}}p(m)}{\sum_{m\in\ms{\emptyset}}p(m)}=\frac{\sum_{m\in\ms{\lnot rain\lor wet}}p(m)}{\sum_{m}p(m)}\\
&\textstyle{=\sum_{m\in\ms{\lnot rain\lor wet}}p(m)=1.}
\end{align*}
Here, $\ms{\emptyset}=\{m_{1},m_{2},m_{3},m_{4}\}$ but $\pms{\emptyset}=\{m_{1},m_{2},m_{4}\}$.
\end{example}
\par
Figure \ref{fig:types} illustrates the assumptions of $\ms{\Delta}=\pms{\Delta}$ and $\ms{\Delta}\neq\emptyset$ for reasoning of $\alpha\in L$ from $\Delta\subseteq L$ using the generative reasoning model $p(L,M,D;\mu=1)$. Both $\alpha$ and $\Delta$ are consistent, since there is at least one model satisfying $\alpha$ and all the formulas in $\Delta$, i.e., $\ms{\alpha}\neq\emptyset$ and $\ms{\Delta}\neq\emptyset$. Such models are highlighted on the middle layer in blue and green, respectively. Figure \ref{fig:types} also shows that every model satisfying all the formulas in $\Delta$ is possible, since there is at least one  datum that supports each model of $\Delta$, i.e., $\ms{\Delta}=\pms{\Delta}$.
\begin{figure*}[t]
\centering
\includegraphics[scale=0.27]{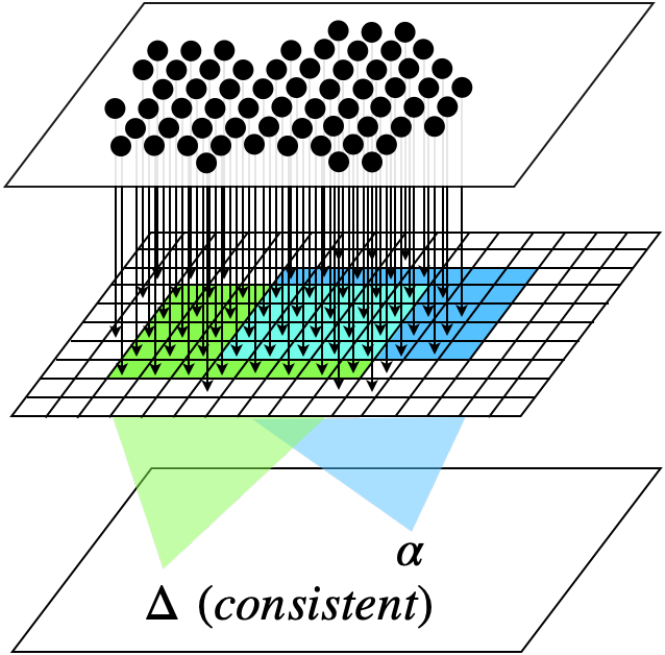}
\includegraphics[scale=0.27]{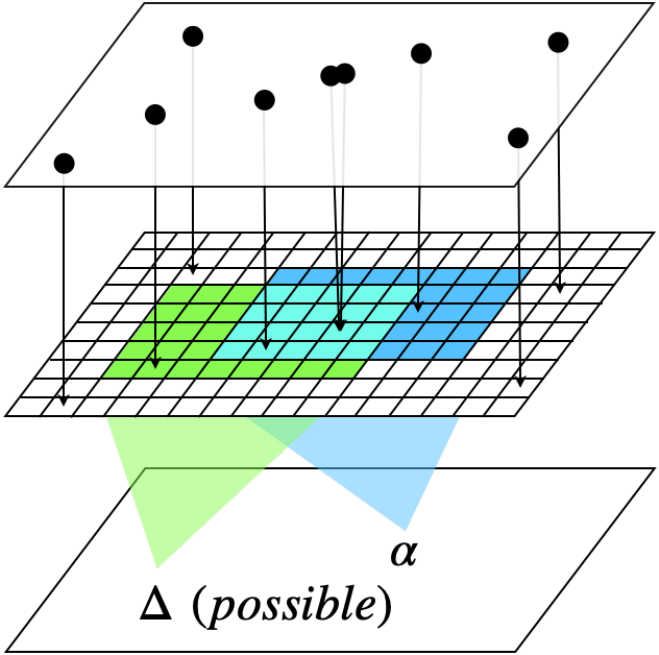}
\includegraphics[scale=0.27]{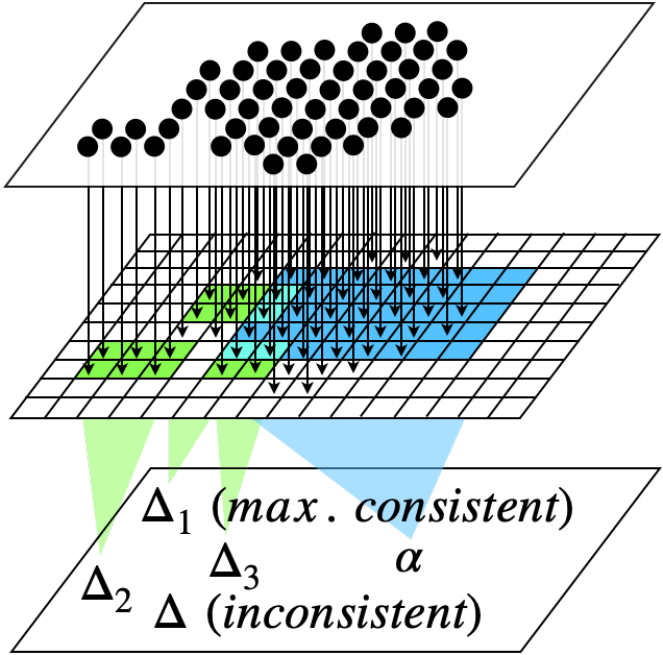}
\includegraphics[scale=0.27]{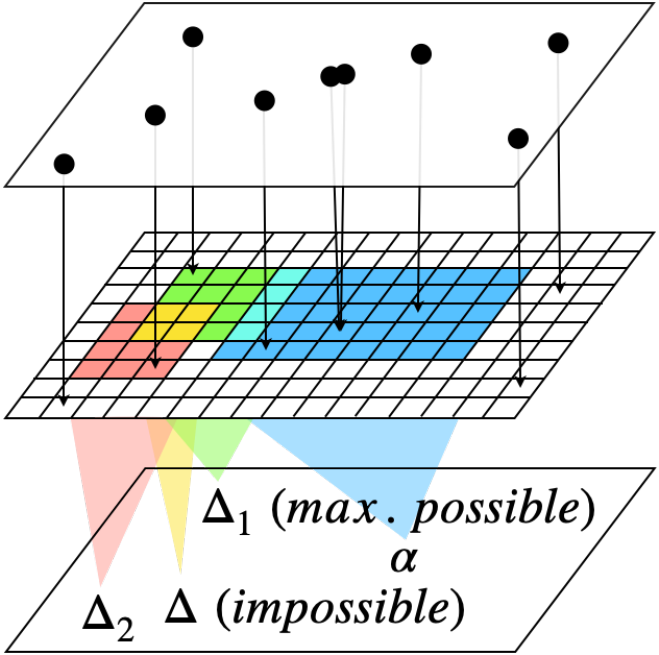}
\caption{The left two graphs illustrate reasoning of $\alpha\in L$ from $\Delta\subseteq L$ using $p(L,M,D;\mu=1)$. The leftmost shows the assumptions of $\ms{\Delta}=\pms{\Delta}$ and $\ms{\Delta}\neq\emptyset$. Each arrow from a datum to model, denoted respectively by a black circle on the top layer and a cell on the middle layer, represents that the datum supports the model. Each model with an incoming arrow thus has a non-zero probability. A model is coloured in green (resp. blue) if all the formulas in $\Delta$ are (resp. $\alpha$) true in the model. The second shows the assumption of $\pms{\Delta}\neq\emptyset$. The right two graphs illustrate reasoning of $\alpha\in L$ from $\Delta\subseteq L$ using $p(L,M,D;\mu\to1)$. The third shows the assumption of $\ams{\Delta}=\pams{\Delta}$. $\Delta_{1}$, $\Delta_{2}$ and $\Delta_{3}$ are the cardinality-maximal consistent subsets of $\Delta$. The rightmost shows no assumption. $\Delta_{1}$ and $\Delta_{2}$ are the cardinality-maximal possible subsets of $\Delta$.}
\label{fig:types}
\end{figure*}

\subsection{Empirical reasoning}\label{sec:possibility}
Theorem \ref{thrm:consistency} and Corollary \ref{cor:consistent_reasoning} depend on the assumption of $\ms{\Delta}=\pms{\Delta}$. In this section, we cancel the assumption to fully generalise our discussions in Section \ref{sec:consistency}. The following theorem relates the probability of a formula to the probability of its possible models.
\begin{theorem}\label{thrm:possibility}
Let $p(L,M,D;\mu=1)$ be a generative reasoning model, and $\alpha\in L$ and $\Delta\subseteq L$.
\begin{eqnarray*}
p(\alpha|\Delta)=
\begin{cases}
\displaystyle{\frac{\sum_{m\in\pms{\Delta}\cap\pms{\alpha}}p(m)}{\sum_{m\in\pms{\Delta}}p(m)}}&\text{if }\pms{\Delta}\neq\emptyset\\
\text{undefined}&\text{otherwise}
\end{cases}
\end{eqnarray*}
\end{theorem}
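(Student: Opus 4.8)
The plan is to retrace the argument of Theorem~\ref{thrm:consistency}, but to stop before the assumption $\ms{\Delta}=\pms{\Delta}$ is invoked and instead use the fact that impossible models carry zero probability. First I would expand $p(\alpha|\Delta)$ with the product and sum rules, together with the conditional independences recorded in Equations~(\ref{eq:1})--(\ref{eq:3}), to obtain
\[
p(\alpha|\Delta)=\frac{\sum_{m}p(\alpha|m)\,p(\Delta|m)\,p(m)}{\sum_{m}p(\Delta|m)\,p(m)},
\qquad p(\Delta|m)=\prod_{\beta\in\Delta}p(\beta|m).
\]
Setting $\mu=1$, exactly as in Theorem~\ref{thrm:consistency} each factor $p(\beta|m)=1^{\ms{\beta}_{m}}0^{1-\ms{\beta}_{m}}$ equals $1$ if $m\in\ms{\beta}$ and $0$ otherwise, so $p(\Delta|m)$ is the indicator of $m\in\ms{\Delta}$ and $p(\alpha|m)$ the indicator of $m\in\ms{\alpha}$; the expression collapses to
\[
p(\alpha|\Delta)=\frac{\sum_{m\in\ms{\Delta}\cap\ms{\alpha}}p(m)}{\sum_{m\in\ms{\Delta}}p(m)}.
\]

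The only genuinely new step is to observe that $p(m)=0$ for every impossible $m$ by the definition of possibility, so every summand indexed by an impossible model drops out. Hence $\sum_{m\in\ms{\Delta}}p(m)=\sum_{m\in\pms{\Delta}}p(m)$ and, since $\pms{\Delta}\cap\pms{\alpha}=\{m\in\ms{\Delta}\cap\ms{\alpha}\,|\,p(m)\neq 0\}$, also $\sum_{m\in\ms{\Delta}\cap\ms{\alpha}}p(m)=\sum_{m\in\pms{\Delta}\cap\pms{\alpha}}p(m)$. This turns the $\ms{\cdot}$-formula above into the claimed $\pms{\cdot}$-formula.

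Finally I would close the case split: every term of $\sum_{m\in\pms{\Delta}}p(m)$ is strictly positive, so the denominator vanishes precisely when $\pms{\Delta}=\emptyset$, in which case $p(\Delta)=0$ and the conditional probability is undefined by division by zero; otherwise the displayed quotient is well-defined. I do not expect a real obstacle here: the entire content is the single remark that deleting zero-probability models is what both upgrades the formula from $\ms{\cdot}$ to $\pms{\cdot}$ and simultaneously relaxes the defined-ness condition from $\ms{\Delta}\neq\emptyset$ to $\pms{\Delta}\neq\emptyset$, which is exactly the generalisation this subsection set out to establish.
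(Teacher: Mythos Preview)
Your proposal is correct and follows essentially the same route as the paper: obtain the $\ms{\cdot}$-formula from the $\mu=1$ computation and then discard impossible models to pass to $\pms{\cdot}$, with the undefined case handled by noting the denominator vanishes exactly when $\pms{\Delta}=\emptyset$. The only cosmetic difference is that the paper cites Theorem~\ref{thrm:consistency} for the intermediate $\ms{\cdot}$-expression whereas you re-derive it from scratch; your choice is arguably cleaner since Theorem~\ref{thrm:consistency} is stated under the hypothesis $\ms{\Delta}=\pms{\Delta}$ that you are explicitly dispensing with.
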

\begin{proof}
$p(m)=0$, for all $m\in\ms{\Delta}\setminus\pms{\Delta}$ and $m\in\ms{\alpha}\setminus\pms{\alpha}$. From Theorem \ref{thrm:consistency}, we thus have 
\begin{align*}
\displaystyle{\frac{\sum_{m\in\ms{\Delta}\cap\ms{\alpha}}p(m)}{\sum_{m\in\ms{\Delta}}p(m)}}=\displaystyle{\frac{\sum_{m\in\pms{\Delta}\cap\pms{\alpha}}p(m)}{\sum_{m\in\pms{\Delta}}p(m)}}.
\end{align*}
The condition of $\ms{\Delta}\neq\emptyset$ should be replaced by $\pms{\Delta}\neq\emptyset$, since there is a possibility of $\ms{\Delta}\neq\emptyset$ and $\pms{\Delta}=\emptyset$. Given the condition of $\ms{\Delta}\neq\emptyset$, this causes a probability undefined due to a division by zero.
\end{proof}
In Section \ref{sec:consistency}, we used the classical consequence relation in Corollary \ref{cor:consistent_reasoning} for a logical characterisation of Theorem \ref{thrm:consistency}. In this section, we define an alternative consequence relation for a logical characterisation of Theorem \ref{thrm:possibility}.
\begin{definition}[Empirical consequence]
Let $\Delta\subseteq L$ and $\alpha\in L$. $\alpha$ is an empirical consequence of $\Delta$, denoted by $\Delta\ent\alpha$, if $\pms{\Delta}\subseteq\pms{\alpha}$.
\end{definition}
\begin{proposition}
Let $\Delta\subseteq L$ and $\alpha\in L$. If $\Delta\cent\alpha$ then $\Delta\ent\alpha$, but not vice versa.
\end{proposition}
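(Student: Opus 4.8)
The plan is to prove the two directions separately, where the first (``if $\Delta\cent\alpha$ then $\Delta\ent\alpha$'') is a short inclusion-chasing argument and the second (failure of the converse) requires exhibiting a counterexample. For the forward direction, I would start from the definition $\Delta\cent\alpha$, which by the recalled characterisation of classical consequence means $\ms{\Delta}\subseteq\ms{\alpha}$. The goal is $\pms{\Delta}\subseteq\pms{\alpha}$. Take any $m\in\pms{\Delta}$; by definition of $\pms{\cdot}$ this means $m\in\ms{\Delta}$ and $p(m)\neq 0$. From $\ms{\Delta}\subseteq\ms{\alpha}$ we get $m\in\ms{\alpha}$, and since $p(m)\neq 0$ still holds, $m\in\pms{\alpha}$. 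Hence $\pms{\Delta}\subseteq\pms{\alpha}$, i.e.\ $\Delta\ent\alpha$. This direction is essentially immediate once one observes that the ``possible'' restriction intersects both sides with the same set $\{m : p(m)\neq 0\}$, and intersection is monotone.

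For the converse, I would construct a small explicit generative reasoning model in which $\pms{\Delta}\subseteq\pms{\alpha}$ holds but $\ms{\Delta}\not\subseteq\ms{\alpha}$. The natural choice reuses the running $rain/wet$ example: take $\Delta=\{rain\}$ and $\alpha=wet$, with a data distribution supporting $m_{4}$ (where $rain$ and $wet$ are both true) but not $m_{3}$ (where $rain$ is true and $wet$ false), exactly as in the row $p(wet\mid rain)=1$, $rain\not\cent wet$ of Table~\ref{ex:classical_consequence}. Then $\ms{rain}=\{m_{3},m_{4}\}$ while $\pms{rain}=\{m_{4}\}\subseteq\{m_{2},m_{4}\}=\pms{wet}$, so $\{rain\}\ent wet$ but $\{rain\}\not\cent wet$ since $m_{3}\in\ms{rain}\setminus\ms{wet}$. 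I would state this as the witness and verify the two containments directly from the definitions.

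I do not expect a genuine obstacle here; the only mild subtlety is that $\ent$ is defined relative to a fixed generative reasoning model (through the set of possible models), so the ``not vice versa'' claim should be read as ``there exists a model and a pair $\Delta,\alpha$ for which the converse fails,'' and the proof should make that quantifier explicit rather than leaving it implicit. It may also be worth remarking that the forward direction holds for \emph{every} generative reasoning model, which is what makes $\ent$ a bona fide weakening of $\cent$ rather than an incomparable relation.
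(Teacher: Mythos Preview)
Your proposal is correct and follows essentially the same approach as the paper: the forward direction is the same monotonicity argument (the paper phrases it as $\ms{\Delta}\setminus X\subseteq\ms{\alpha}\setminus X$ for any $X$, you phrase it as intersection with $\{m:p(m)\neq0\}$ being monotone), and the converse is handled by exhibiting a model $m\in\ms{\Delta}\setminus\ms{\alpha}$ with $p(m)=0$. The only cosmetic difference is that the paper gives the counterexample abstractly (assume $\ms{\Delta}=\ms{\alpha}\cup\{m\}$ with $p(m)=0$) whereas you instantiate it with the running $rain/wet$ example and $m_{3}$; your remarks on the implicit quantifier structure of ``not vice versa'' are a welcome clarification that the paper leaves unstated.
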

\begin{proof}
($\Rightarrow$) $\Delta\ent\alpha$ iff $\pms{\Delta}\subseteq\pms{\alpha}$ where $\pms{\Delta}=\{m\in\ms{\Delta}|p(m)\neq0\}$ and $\pms{\alpha}=\{m\in\ms{\alpha}|p(m)\neq0\}$. $\ms{\Delta}\subseteq\ms{\alpha}$ implies $\pms{\Delta}\subseteq\pms{\alpha}$, since $\ms{\Delta}\setminus X\subseteq\ms{\alpha}\setminus X$, for all sets $X$. ($\Leftarrow$) Suppose $\Delta$, $\alpha$ and $m$ such that $\ms{\Delta}=\ms{\alpha}\cup\{m\}$ and $p(m)=0$. Then, $\Delta\ent\alpha$, but $\Delta\not\cent\alpha$.
\end{proof}
The following Corollary shows the relationship between the generative reasoning model $p(L,M,D;\mu=1)$ and the empirical consequence relation $\ent$.
\begin{corollary}\label{cor:possible_reasoning}
Let $p(L,M,D;\mu=1)$ be a generative reasoning model, and $\alpha\in L$ and $\Delta\subseteq L$ such that $\pms{\Delta}\neq\emptyset$. $p(\alpha|\Delta)=1$ iff $\Delta\ent\alpha$.
\end{corollary}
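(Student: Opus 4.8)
The plan is to mirror the proof of Corollary~\ref{cor:consistent_reasoning}, but to invoke Theorem~\ref{thrm:possibility} in place of Theorem~\ref{thrm:consistency}, so that the model sets that appear are the \emph{possible} ones $\pms{\cdot}$ and the consequence relation that surfaces is $\ent$ rather than $\cent$. The hypothesis has been weakened accordingly: instead of $\ms{\Delta}=\pms{\Delta}$ together with $\ms{\Delta}\neq\emptyset$, we only assume $\pms{\Delta}\neq\emptyset$, which is exactly what Theorem~\ref{thrm:possibility} needs to rule out an undefined conditional.

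First I would use $\pms{\Delta}\neq\emptyset$ to land in the first case of Theorem~\ref{thrm:possibility}, giving
\begin{align*}
p(\alpha|\Delta)=\frac{\sum_{m\in\pms{\Delta}\cap\pms{\alpha}}p(m)}{\sum_{m\in\pms{\Delta}}p(m)},
\end{align*}
and I would note that the denominator is strictly positive, because every $m\in\pms{\Delta}$ satisfies $p(m)\neq 0$ by the definition of $\pms{\cdot}$, and a probability that is not zero is positive. Then, since $\pms{\Delta}\cap\pms{\alpha}\subseteq\pms{\Delta}$, the ratio equals $1$ if and only if the two sums coincide, i.e. if and only if $\sum_{m\in\pms{\Delta}\setminus\pms{\alpha}}p(m)=0$. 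Because each model counted here lies in $\pms{\Delta}$ and hence carries strictly positive probability, this forces $\pms{\Delta}\setminus\pms{\alpha}=\emptyset$, that is $\pms{\Delta}\subseteq\pms{\alpha}$, which is precisely $\Delta\ent\alpha$. The converse direction is immediate: if $\Delta\ent\alpha$ then $\pms{\Delta}\cap\pms{\alpha}=\pms{\Delta}$, so numerator and denominator are equal and $p(\alpha|\Delta)=1$.

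I do not expect a real obstacle; the proof is essentially the bookkeeping already done for Corollary~\ref{cor:consistent_reasoning}, transported through Theorem~\ref{thrm:possibility}. The one point that deserves explicit mention is the role of the hypothesis: dropping $\ms{\Delta}=\pms{\Delta}$ is harmless because the quotient in Theorem~\ref{thrm:possibility} already ranges only over models of non-zero probability, so no vanishing weights can sneak into the denominator; and retaining $\pms{\Delta}\neq\emptyset$ is exactly what keeps $p(\alpha|\Delta)$ defined, so that the biconditional has content on both sides.
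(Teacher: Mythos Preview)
Your proposal is correct and follows essentially the same approach as the paper: invoke Theorem~\ref{thrm:possibility} under the hypothesis $\pms{\Delta}\neq\emptyset$, use that every $m\in\pms{\Delta}$ has $p(m)\neq 0$, and conclude that the quotient equals $1$ iff $\pms{\Delta}\subseteq\pms{\alpha}$, i.e.\ $\Delta\ent\alpha$. The paper's version is terser but the logical content is identical.
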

\begin{proof}
$\Delta\ent\alpha$ iff $\pms{\Delta}\subseteq\pms{\alpha}$. $p(m)\neq0$, for all $m\in\pms{\Delta}$. Thus, from Theorem \ref{thrm:possibility}, $p(\alpha|\Delta)=1$ iff $\pms{\Delta}\subseteq\pms{\alpha}$.
\end{proof}
Note that Theorem \ref{thrm:possibility} and Corollary \ref{cor:possible_reasoning} no longer depend on the assumption of $\ms{\Delta}=\pms{\Delta}$ required in Theorem \ref{thrm:consistency} and Corollary \ref{cor:consistent_reasoning}. Figure \ref{fig:types} illustrates the assumption of $\pms{\Delta}\neq\emptyset$ for reasoning of $\alpha\in L$ from $\Delta\subseteq L$ using the generative reasoning model $p(L, M, D; \mu=1)$. It shows that both $\alpha$ and $\Delta$ are consistent, i.e., $\ms{\alpha}\neq\emptyset$ and $\ms{\Delta}\neq\emptyset$, since there is at least one model for both $\alpha$ and $\Delta$ satisfying the formulas. It also shows that $\Delta$ and $\alpha$ are possible, i.e., $\pms{\Delta}\neq\emptyset$ and $\pms{\alpha}\neq\emptyset$, since there is at least one model for both $\Delta$ and $\alpha$ supported by data.
\subsection{Logical reasoning in inconsistency}\label{sec:paraconsistency}
\begin{figure}[t]
 \begin{center}
\includegraphics[scale=0.2]{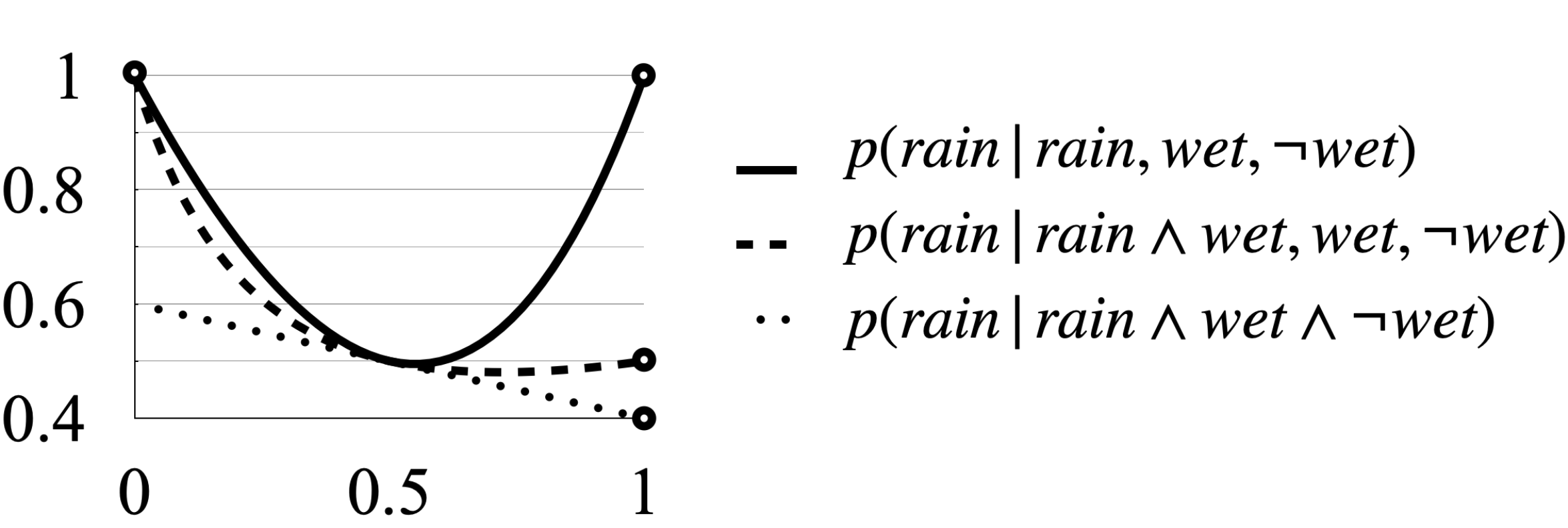}
 \end{center}
\caption{Three examples of reasoning from inconsistency. The probability versus $\mu$.}
\label{fig:ex_limit}
\end{figure}
Theorem \ref{thrm:consistency} and Corollary \ref{cor:consistent_reasoning} assume $\ms{\Delta}\neq\emptyset$ in practice. The conditional probability $p(\alpha|\Delta)$ is undefined otherwise. This section aims to cancel the assumption to fully generalise our discussions in Section \ref{sec:consistency} so that we can reason also from an inconsistent source of information. To this end, we look at the generative reasoning model $p(L,M,D;\mu\to 1)$, rather than $p(L,M,D;\mu=1)$, where $\mu\to 1$ represents $\mu$ approaching one, i.e., $\lim_{\mu\to 1}$. The following example shows the intuition of how the limit works and how it naturally generalises reasoning regardless of the consistency of its premises.
\begin{example}
Consider the three conditional probabilities given different inconsistent premises shown in Figure \ref{fig:ex_limit}. Suppose that the probability distribution in Table \ref{tab:hierarchy} is given by $p(M)=(m_{1},m_{2},m_{3},m_{4})=(0.4,0.2,0.1,0.3)$. The conditional probability shown on the top right is expanded as follows, where $rain$ and $wet$ are abbreviated to $r$ and $w$.
{\small
\begin{align*}
&p(r|r,w,\lnot w)=\frac{\sum_{m}p(r|m)^{2}p(w|m)p(\lnot w|m)p(m)}{\sum_{m}p(r|m)p(w|m)p(\lnot w|m)p(m)}\\
&=\frac{(p(m_{1})+p(m_{2}))\mu(1-\mu)^{3}+(p(m_{3})+p(m_{4}))\mu^{3}(1-\mu)}{(p(m_{1})+p(m_{2}))\mu(1-\mu)^{2}+(p(m_{3})+p(m_{4}))\mu^{2}(1-\mu)}\\
&=\frac{0.6\mu(1-\mu)^{3}+0.4\mu^{3}(1-\mu)}{0.6\mu(1-\mu)^{2}+0.4\mu^{2}(1-\mu)}
\end{align*}
}
The graph with the solid line in Figure \ref{fig:ex_limit} shows $p(rain|rain,wet,\lnot wet)$ given different $\mu$ values. The graph also includes the other two conditional probabilities calculated in the same manner. Each of the open circles represents an undefined value. This means that no substitution gives a probability, even though the curve approaches a certain probability. The certain probability can only be obtained by the use of limit. Indeed, given $\mu\to1$, the three conditional probabilities turn out to be 1, 0.5 and 0.4, respectively.
\end{example}
Everything is entailed from an inconsistent set of formulas in formal logic. The use of limit is a reasonable alternative, since it allows us to consider what if there is a very tiny chance of the formula being true. The mathematical correctness of the solution can be shown using maximal consistent sets and maximal possible sets.
\begin{definition}[Maximal consistent sets]
Let $S,\Delta\subseteq L$. $S\subseteq\Delta$ is a maximal consistent subset of $\Delta$ if $\ms{S}\neq\emptyset$ and $\ms{S\cup\{\alpha\}}=\emptyset$, for all $\alpha\in\Delta\setminus S$.
\end{definition}
We refer to a maximal consistent subset as a cardinality-maximal consistent subset when the set has the maximum cardinality. We use symbol $MCS(\Delta)$ to denote the set of the cardinality-maximal consistent subsets of $\Delta\subseteq L$. We use symbol $\ams{\Delta}$ to denote the set of the models of the cardinality-maximal consistent subsets of $\Delta$, i.e., $\ams{\Delta}=\bigcup_{S\in MCS(\Delta)}\ms{S}$.
\begin{example}[Cardinality-maximal consistent sets]\label{ex:MCS}
Consider the model distribution shown in Figure \ref{fig:ex_limit}. Consider $\Delta=\{$ $rain$, $wet$, $rain\to wet$, $\lnot wet\}$. It gives the following three maximal consistent subsets of $\Delta$: $S_{1}=\{rain,wet,rain\to wet\}$, $S_{2}=\{rain,\lnot wet\}$ and $S_{3}=\{rain\to wet,\lnot wet\}$. Only $S_{1}$ is the cardinality-maximal consistent subset of $\Delta$, i.e., $MCS(\Delta)=\{S_{1}\}$. Therefore, $\ams{\Delta}=\bigcup_{S\in MCS(\Delta)}\ms{S}=\ms{S_{1}}=\{m_{4}\}$.
\end{example}
\begin{definition}[Maximal possible sets]
Let $S,\Delta\subseteq L$. $S\subseteq\Delta$ is a maximal possible subset of $\Delta$ if $\pms{S}\neq\emptyset$ and $\pms{S\cup\{\alpha\}}=\emptyset$, for all $\alpha\in\Delta\setminus S$.
\end{definition}
Similarly, we refer to a maximal possible subset as a cardinality-maximal possible subset when the set has the maximum cardinality. We use symbol $MPS(\Delta)$ to denote the set of the cardinality-maximal possible subsets of $\Delta\subseteq L$. We use symbol $\pams{\Delta}$ to denote the set of possible models of the cardinality-maximal possible subsets of $\Delta$, i.e., $\pams{\Delta}=\bigcup_{S\in MPS(\Delta)}\pms{S}$.
\begin{example}[Cardinality-maximal possible sets]\label{ex:MPS}
Suppose that the probability distribution in Table \ref{tab:hierarchy} is given by $p(M)=(m_{1},m_{2},m_{3},m_{4})=(0.9,0.1,0,0)$. Consider $\Delta=\{$ $rain$, $wet$, $rain\to wet,\lnot wet\}$. It gives the following two maximal possible subsets of $\Delta$: $S_{1}=\{wet,rain\to wet\}$ and $S_{2}=\{rain\to wet,\lnot wet\}$. Both $S_{1}$ and $S_{2}$ are the cardinality-maximal possible subsets of $\Delta$, i.e., $MPS(\Delta)=\{S_{1},S_{2}\}$. Only $m_{2}$ is the possible model of $S_{1}$ and $m_{1}$ is the possible model of $S_{2}$. Namely, $\pms{S_{1}}=\{m_{2}\}$ and $\pms{S_{2}}=\{m_{1}\}$. Therefore, $\pams{\Delta}=\bigcup_{S\in MPS(\Delta)}\pms{S}=\{m_{1},m_{2}\}$.
\end{example}
Obviously, $\ams{\Delta}=\ms{\Delta}$ if there is a model of $\Delta$, i.e., $\ms{\Delta}\neq\emptyset$. Similarly, $\pams{\Delta}=\pms{\Delta}$ if there is a possible model of $\Delta$, i.e., $\pms{\Delta}\neq\emptyset$. Note that if $\Delta$ is an empty set or $\Delta$ only includes tautologies then every model satisfies all the formulas in the possibly empty $\Delta$. $\ms{\Delta}$ is thus the set of all models, and therefore $\ms{\Delta}\neq\emptyset$. Moreover, $\pms{\Delta}\neq\emptyset$, since $p(M)$ is a probability distribution, and thus, there is at least one model $m$ such that $p(m)\neq 0$.
\begin{lemma}\label{lemma:paraconsistency}
$\ams{\Delta}\neq\emptyset$, for all $\Delta\subseteq L$.
\end{lemma}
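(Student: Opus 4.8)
The plan is to reduce the statement to two facts: that $MCS(\Delta)$ is non-empty, and that every member of $MCS(\Delta)$ is by definition consistent, hence has at least one model. Granting these, if $S\in MCS(\Delta)$ then $\ms{S}\neq\emptyset$, and since $\ams{\Delta}=\bigcup_{S'\in MCS(\Delta)}\ms{S'}\supseteq\ms{S}$, we conclude $\ams{\Delta}\neq\emptyset$. The second fact is immediate from the definition of a maximal consistent subset (it explicitly demands $\ms{S}\neq\emptyset$), so the whole argument rests on showing $MCS(\Delta)\neq\emptyset$, i.e.\ that a cardinality-maximal consistent subset of $\Delta$ exists.

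To establish that, I would first note that $\emptyset\subseteq\Delta$ is consistent, because $\ms{\emptyset}$ is the set of all models $\{m_{1},\dots,m_{N}\}$, which is non-empty; hence at least one consistent subset exists, and the only real issue is attainment of the maximum cardinality. The key observation is that the number of models is finite: for each model $m_{n}$ put $\Delta_{m_{n}}=\{\beta\in\Delta\mid m_{n}\in\ms{\beta}\}$. Each $\Delta_{m_{n}}$ is consistent, since $m_{n}\in\ms{\Delta_{m_{n}}}$, and conversely any consistent $S\subseteq\Delta$ satisfies $S\subseteq\Delta_{m_{n}}$ for every $m_{n}\in\ms{S}$. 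It follows that every maximal consistent subset of $\Delta$ coincides with one of $\Delta_{m_{1}},\dots,\Delta_{m_{N}}$, so there are only finitely many maximal consistent subsets; moreover the finite non-empty family $\{\Delta_{m_{1}},\dots,\Delta_{m_{N}}\}$ has a $\subseteq$-maximal element, which is a maximal consistent subset, so the set of maximal consistent subsets is non-empty. A finite non-empty family of sets contains one of greatest cardinality, and that set is then a cardinality-maximal consistent subset; hence $MCS(\Delta)\neq\emptyset$.

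Putting the pieces together, I would pick $S^{*}\in MCS(\Delta)$, note that $S^{*}=\Delta_{m_{n}}$ for some $n$ with $m_{n}\in\ms{S^{*}}$, and conclude $m_{n}\in\ms{S^{*}}\subseteq\ams{\Delta}$, so $\ams{\Delta}\neq\emptyset$.

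The one delicate point, which I expect to be the main obstacle, is the existence of a \emph{cardinality}-maximal consistent subset when $\Delta$ is infinite: simply enlarging a consistent set until it can no longer be enlarged yields a $\subseteq$-maximal set, but two $\subseteq$-maximal consistent subsets could a priori have incomparable or non-attained cardinalities. Finiteness of $\{m_{1},\dots,m_{N}\}$ is exactly what rules this out, by capping the number of maximal consistent subsets at $N$. If one is content to assume $\Delta$ finite, this step collapses to the trivial remark that there are only finitely many subsets and $\emptyset$ is among the consistent ones.
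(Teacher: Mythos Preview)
Your proof is correct and follows the same high-level decomposition as the paper: show $MCS(\Delta)\neq\emptyset$, note that any $S\in MCS(\Delta)$ satisfies $\ms{S}\neq\emptyset$ by definition, and conclude. The paper's argument for $MCS(\Delta)\neq\emptyset$ is terser than yours: it simply observes that $\emptyset$ is a consistent subset of $\Delta$, infers that a maximal consistent subset exists, and immediately writes $MCS(\Delta)\neq\emptyset$, without distinguishing $\subseteq$-maximality from cardinality-maximality. Your additional step---introducing the sets $\Delta_{m_{n}}=\{\beta\in\Delta\mid m_{n}\in\ms{\beta}\}$ and using finiteness of the model set $\{m_{1},\dots,m_{N}\}$ to bound the family of maximal consistent subsets---actually fills the small gap the paper leaves open, namely that a \emph{cardinality}-maximal consistent subset is attained even when $\Delta$ is infinite. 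So the route is the same, but your version is strictly more careful on the one point where care is needed.
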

\begin{proof}
By definition, $\ams{\Delta}=\bigcup_{S\in MCS(\Delta)}\ms{S}$. For all $\Delta\subseteq L$, the empty set $\emptyset$ is a consistent subset of $\Delta$. Thus, there is at least one (possibly empty) maximal consistent subset of $\Delta$. Namely, $MCS(\Delta)\neq\emptyset$. For all $S\in MCS(\Delta)$, since $S$ is consistent, there is at least one model satisfying all the elements of (possibly empty) $S$. Namely, $\ms{S}\neq\emptyset$. This guarantees $\bigcup_{S\in MCS(\Delta)}\ms{S}\neq\emptyset$.
\end{proof}
\begin{example}
Let $\Delta_{1}=\{\alpha,\lnot\alpha\}$ and $\Delta_{2}=\{\alpha\land\lnot\alpha\}$, for $\alpha\in L$. $\ams{\Delta_{1}}=$ $\bigcup_{S\in MCS(\Delta_{1})}\ms{S}=$ $\bigcup_{S\in \{\{\alpha\},\{\lnot\alpha\}\}}\ms{S}=$ $\ms{\alpha}\cup\ms{\lnot\alpha}={\cal M}$. $\ams{\Delta_{2}}$ $=\bigcup_{S\in MCS(\Delta_{2})}\ms{S}=$ $\bigcup_{S\in \{\emptyset\}}\ms{S}=$ $\ms{\emptyset}={\cal M}$.
Here, ${\cal M}$ denotes all the models associated with $L$.
\end{example}
\begin{lemma}\label{lemma:parapossibility}
$\pams{\Delta}\neq\emptyset$, for all $\Delta\subseteq L$.
\end{lemma}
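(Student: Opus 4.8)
The plan is to run the possibility-analogue of the proof of Lemma~\ref{lemma:paraconsistency}. By definition $\pams{\Delta}=\bigcup_{S\in MPS(\Delta)}\pms{S}$, so it suffices to establish two things: that $MPS(\Delta)$ is non-empty, and that every $S\in MPS(\Delta)$ satisfies $\pms{S}\neq\emptyset$.

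The second point is immediate from the definitions: a maximal possible subset $S$ of $\Delta$ is by definition required to have $\pms{S}\neq\emptyset$. Hence $\bigcup_{S\in MPS(\Delta)}\pms{S}$ is a union of non-empty sets, and it is non-empty as soon as the index set $MPS(\Delta)$ is non-empty.

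For the first point I would argue that the empty set is always a possible subset of $\Delta$. Indeed $\ms{\emptyset}$ consists of all models, so $\pms{\emptyset}=\{m\mid p(m)\neq 0\}$, and since $p(M)$ is a probability distribution over the finitely many models there is at least one model $m$ with $p(m)\neq 0$; thus $\pms{\emptyset}\neq\emptyset$. The collection of possible subsets of $\Delta$ is therefore non-empty, and since cardinalities are bounded, a cardinality-maximal possible subset exists, i.e.\ $MPS(\Delta)\neq\emptyset$. (Equivalently, starting from $\emptyset$ one greedily adds formulas of $\Delta$ while preserving possibility, obtaining a maximal possible subset.) Combining the two points yields $\pams{\Delta}\neq\emptyset$.

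The argument is essentially routine; the only step that calls for a moment's care is the existence of a maximal possible subset — exactly the step that in Lemma~\ref{lemma:paraconsistency} is justified by ``the empty set is a consistent subset of $\Delta$.'' Here the analogous witness is that $\emptyset$ is a \emph{possible} subset, which depends only on $p(M)$ being a genuine probability distribution, not on the value of $\mu$ nor on any consistency property of $\Delta$. This is precisely what makes the lemma meaningful: it holds even when $\Delta$ is inconsistent or has no possible model taken as a whole.
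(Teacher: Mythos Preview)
Your proof is correct and follows exactly the approach the paper intends: the paper's own proof simply reads ``Same as Lemma~\ref{lemma:paraconsistency}. Use $MPS(\Delta)$ and $\pms{\Delta}$,'' and you have spelled out precisely that analogue, including the point that $\pms{\emptyset}\neq\emptyset$ because $p(M)$ is a probability distribution.
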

\begin{proof}
Same as Lemma \ref{lemma:paraconsistency}. Use $MPS(\Delta)$ and $\pms{\Delta}$.
\end{proof}
\begin{example}
Suppose that the probability distribution in Table \ref{tab:hierarchy} is given by $p(M)=(m_{1},m_{2},m_{3},m_{4})=(0.5,0.2,0,0.3)$.
Let $\Delta_{1}=\{rain,\lnot rain\}$ and $\Delta_{2}=\{rain\land\lnot rain\}$. $\pams{\Delta_{1}}$ $=\bigcup_{S\in MPS(\Delta_{1})}\pms{S}$ $=\pms{rain}\cup\pms{\lnot rain}$ $=\{m_{1},m_{2},m_{4}\}$. $\pams{\Delta_{2}}$ $=\bigcup_{S\in MPS(\Delta_{2})}\pms{S}$ $=\pms{\emptyset}$ $=\{m_{1},m_{2},m_{4}\}$
\end{example}
\par
The generative reasoning model $p(L,M,D;\mu\to1)$ has the following property.
\begin{theorem}\label{thrm:paraconsistency}
Let $p(L,M,D;\mu\to1)$ be a generative reasoning model, and $\alpha\in L$ and $\Delta\subseteq L$ such that $\ams{\Delta}=\pams{\Delta}$.
\begin{align*}
p(\alpha|\Delta)=\frac{\sum_{m\in\ams{\Delta}\cap\ms{\alpha}}p(m)}{\sum_{m\in\ams{\Delta}}p(m)}
\end{align*}
\end{theorem}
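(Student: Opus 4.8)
The plan is to compute $p(\alpha\mid\Delta;\mu)$ as an explicit rational function of $\mu$ — reusing the opening move of the proof of Theorem~\ref{thrm:consistency} — and then to read off the limit $\mu\to 1$ by comparing the orders of vanishing of its numerator and denominator at $\mu=1$. First I would write
\[
p(\alpha\mid\Delta;\mu)=\frac{\sum_{m}p(\alpha\mid m)\,p(\Delta\mid m)\,p(m)}{\sum_{m}p(\Delta\mid m)\,p(m)},
\]
discard every model with $p(m)=0$, and substitute the Bernoulli form $p(\Delta\mid m)=\mu^{s_m}(1-\mu)^{|\Delta|-s_m}$, where $s_m:=|\{\beta\in\Delta:m\in\ms{\beta}\}|$ is the number of premises true in $m$; likewise $p(\Delta\cup\{\alpha\}\mid m)=\mu^{s_m+\ms{\alpha}_m}(1-\mu)^{|\Delta|+1-s_m-\ms{\alpha}_m}$. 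Grouping the possible models by the value of $s_m$ turns both numerator and denominator into polynomials in $(1-\mu)$ whose lowest-degree term governs the limit. Put $s^{*}:=\max\{s_m:p(m)\neq 0\}$, which exists because $p(M)$ gives positive mass to at least one model.

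The key step — and the one I expect to be the main obstacle — is the combinatorial identity
\[
\{\,m:p(m)\neq 0,\ s_m=s^{*}\,\}=\pams{\Delta}.
\]
For ``$\subseteq$'': if $m$ is possible with $s_m=s^{*}$, then $S_m:=\{\beta\in\Delta:m\in\ms{\beta}\}$ is possible (witnessed by $m$), and adding any $\gamma\in\Delta\setminus S_m$ destroys possibility — otherwise a possible model of $S_m\cup\{\gamma\}$ would satisfy $s^{*}+1$ premises, contradicting the maximality of $s^{*}$; since moreover no possible subset of $\Delta$ can have cardinality exceeding $s^{*}$ (a possible subset of $\Delta$ lies inside $S_{m'}$ for some possible $m'$), $S_m$ is a \emph{cardinality}-maximal possible subset and $m\in\pms{S_m}\subseteq\pams{\Delta}$. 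For ``$\supseteq$'': if $m\in\pms{S}$ with $S\in MPS(\Delta)$, then $m$ is possible and $s_m\geq|S|$, and since every member of $MPS(\Delta)$ has cardinality exactly $s^{*}$ we get $s_m=s^{*}$. Care is needed here to keep the inclusion-maximality in the definition of a ``maximal possible subset'' separate from the cardinality-maximality used by $MPS(\Delta)$, and to check that the largest premise-count $s^{*}$ really coincides with the common cardinality of the members of $MPS(\Delta)$.

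With this identity the denominator equals $(1-\mu)^{|\Delta|-s^{*}}q_{\mathrm d}(\mu)$ for a polynomial $q_{\mathrm d}$ with $q_{\mathrm d}(1)=\sum_{m\in\pams{\Delta}}p(m)>0$. For the numerator I would split on whether $\pams{\Delta}\cap\ms{\alpha}=\emptyset$. If it is nonempty, the maximum of $s_m+\ms{\alpha}_m$ over possible $m$ is $s^{*}+1$, attained exactly on $\pams{\Delta}\cap\ms{\alpha}$, so the numerator is $(1-\mu)^{|\Delta|-s^{*}}q_{\mathrm n}(\mu)$ with $q_{\mathrm n}(1)=\sum_{m\in\pams{\Delta}\cap\ms{\alpha}}p(m)$; numerator and denominator then vanish to the same order at $\mu=1$ and the limit is $q_{\mathrm n}(1)/q_{\mathrm d}(1)$. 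If $\pams{\Delta}\cap\ms{\alpha}=\emptyset$, that maximum is only $s^{*}$, the numerator carries one extra factor of $(1-\mu)$, the limit is $0$, and $\sum_{m\in\pams{\Delta}\cap\ms{\alpha}}p(m)=0$ as well. In either case
\[
p(\alpha\mid\Delta)=\frac{\sum_{m\in\pams{\Delta}\cap\ms{\alpha}}p(m)}{\sum_{m\in\pams{\Delta}}p(m)},
\]
and the hypothesis $\ams{\Delta}=\pams{\Delta}$ then rewrites $\pams{\Delta}$ as $\ams{\Delta}$, giving the claim. Degenerate inputs ($\Delta$ empty, $\Delta$ all tautologies, or $\Delta$ itself possible) require no separate treatment: there $\ams{\Delta}=\pams{\Delta}=\pms{\Delta}$ and the statement collapses to Theorem~\ref{thrm:possibility}.
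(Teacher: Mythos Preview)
Your argument is correct and rests on the same device as the paper's: write $p(\Delta\mid m)=\mu^{|\Delta|_m}(1-\mu)^{|\Delta|-|\Delta|_m}$, factor out the dominant power of $(1-\mu)$, and let $\mu\to 1$. The organisation differs in one respect worth noting. The paper splits the models into $\ams{\Delta}$ and its complement, observes that $|\Delta|_{\hat m}$ is constant on $\ams{\Delta}$, divides numerator and denominator by $(1-\mu)^{|\Delta|-|\Delta|_{\hat m}}$, and takes the limit directly; the hypothesis $\ams{\Delta}=\pams{\Delta}$ enters only implicitly, to guarantee that $\sum_{\hat m\in\ams{\Delta}}p(\hat m)>0$. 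You instead discard impossible models at the outset, identify the surviving leading-order set as $\pams{\Delta}$ via the explicit combinatorial identity $\{m:p(m)\neq 0,\ s_m=s^{*}\}=\pams{\Delta}$, and apply the hypothesis only at the very end to rewrite $\pams{\Delta}$ as $\ams{\Delta}$. In effect you prove Theorem~\ref{thrm:parapossibility} first and obtain Theorem~\ref{thrm:paraconsistency} as a corollary; this makes the role of the hypothesis transparent and supplies the combinatorial step the paper leaves tacit. Conversely, the paper's handling of the numerator---keeping $p(\alpha\mid\hat m)$ as a separate factor rather than folding it into the exponent of $\mu$---avoids your case split on whether $\pams{\Delta}\cap\ms{\alpha}$ is empty: after dividing by $(1-\mu)^{|\Delta|-s^{*}}$, each summand with $s_m=s^{*}$ contributes $p(\alpha\mid m)\,p(m)\to \ms{\alpha}_m\,p(m)$ uniformly.
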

\begin{proof}
We use symbol $|\Delta|$ to denote the number of formulas in $\Delta$ and symbol $|\Delta|_{m}$ to denote the number of formulas in $\Delta$ that are true in $m$, i.e., $|\Delta|_{m}=\sum_{\beta\in\Delta}\m{\beta}_{m}$. Dividing models into $\ams{\Delta}$ and the others, we have
{\small
\begin{align*}
&p(\alpha|\Delta)=\lim_{\mu\rightarrow 1}\frac{\sum_{m}p(\alpha|m)p(m)p(\Delta|m)}{\sum_{m}p(m)p(\Delta|m)}=\lim_{\mu\to 1}\\
&\frac{\displaystyle{\sum_{\hat{m}\in\ams{\Delta}}p(\alpha|\hat{m})p(\hat{m})p(\Delta|\hat{m})+\sum_{m\notin\ams{\Delta}}p(\alpha|m)p(m)p(\Delta|m)}}{\displaystyle{\sum_{\hat{m}\in\ams{\Delta}}p(\hat{m})p(\Delta|\hat{m})+\sum_{m\notin\ams{\Delta}}p(m)p(\Delta|m)}}.
\end{align*}
}
Now, $p(\Delta|m)$ can be developed as follows, for all $m$ (regardless of the membership of $\ams{\Delta}$).
{\small
\begin{align*}
&p(\Delta|m)=\prod_{\beta\in\Delta}p(\beta|m)=\prod_{\beta\in\Delta}\mu^{\ms{\beta}_{m}}(1-\mu)^{1-\ms{\beta}_{m}}\\
&=\mu^{\sum_{\beta\in\Delta}\ms{\beta}_{m}}(1-\mu)^{\sum_{\beta\in\Delta}(1-\ms{\beta}_{m})}=\mu^{|\Delta|_{m}}(1-\mu)^{|\Delta|-|\Delta|_{m}}
\end{align*}
}
Therefore, $p(\alpha|\Delta)=\lim_{\mu\rightarrow 1}\frac{W+X}{Y+Z}$ where
{\small
\begin{align*}
W&=\textstyle{\sum_{\hat{m}\in\ams{\Delta}}p(\alpha|\hat{m})p(\hat{m})\mu^{|\Delta|_{\hat{m}}}(1-\mu)^{|\Delta|-|\Delta|_{\hat{m}}}}\\
X&=\textstyle{\sum_{m\notin\ams{\Delta}}p(\alpha|m)p(m)\mu^{|\Delta|_{m}}(1-\mu)^{|\Delta|-|\Delta|_{m}}}\\
Y&=\textstyle{\sum_{\hat{m}\in\ams{\Delta}}p(\hat{m})\mu^{|\Delta|_{\hat{m}}}(1-\mu)^{|\Delta|-|\Delta|_{\hat{m}}}}\\
Z&=\textstyle{\sum_{m\notin\ams{\Delta}}p(m)\mu^{|\Delta|_{m}}(1-\mu)^{|\Delta|-|\Delta|_{m}}}.
\end{align*}
}
$\ams{\Delta}\neq\emptyset$ from Lemma \ref{lemma:paraconsistency}. Since $\hat{m}\in \ams{\Delta}$ is a model of a cardinality-maximal consistent subset of $\Delta$, $|\Delta|_{\hat{m}}$ has the same value, for all $\hat{m}\in\ams{\Delta}$. Therefore, the fraction can be simplified by dividing the denominator and numerator by $(1-\mu)^{|\Delta|-|\Delta|_{\hat{m}}}$. We thus have $p(\alpha|\Delta)=\lim_{\mu\rightarrow 1}\frac{W'+X'}{Y'+Z'}$ where
{\small
\begin{align*}
W'&=\textstyle{\sum_{\hat{m}\in\ams{\Delta}}p(\alpha|\hat{m})p(\hat{m})\mu^{|\Delta|_{\hat{m}}}}\\
X'&=\textstyle{\sum_{m\notin\ams{\Delta}}p(\alpha|m)p(m)\mu^{|\Delta|_{m}}(1-\mu)^{|\Delta|_{\hat{m}}-|\Delta|_{m}}}\\
Y'&=\textstyle{\sum_{\hat{m}\in\ams{\Delta}}p(\hat{m})\mu^{|\Delta|_{\hat{m}}}}\\
Z'&=\textstyle{\sum_{m\notin\ams{\Delta}}p(m)\mu^{|\Delta|_{m}}(1-\mu)^{|\Delta|_{\hat{m}}-|\Delta|_{m}}}.
\end{align*}
}
Applying the limit operation, we can cancel out $X'$ and $Z'$.
{\small
\begin{align*}
p(\alpha|\Delta)=\frac{\displaystyle{\sum_{\hat{m}\in\ams{\Delta}}p(\alpha|\hat{m})p(\hat{m})}}{\displaystyle{\sum_{\hat{m}\in\ams{\Delta}}p(\hat{m})}}=\frac{\displaystyle{\sum_{\hat{m}\in\ams{\Delta}}1^{\ms{\alpha}_{\hat{m}}}0^{1-\ms{\alpha}_{\hat{m}}}p(\hat{m})}}{\displaystyle{\sum_{\hat{m}\in\ams{\Delta}}p(\hat{m})}}.
\end{align*}
}
We have the theorem, since $1^{\ms{\alpha}_{\hat{m}}}0^{1-\ms{\alpha}_{\hat{m}}}=1^{1}0^{0}=1$ if $\hat{m}\in\ms{\alpha}$ and $1^{\ms{\alpha}_{\hat{m}}}0^{1-\ms{\alpha}_{\hat{m}}}=1^{0}0^{1}=0$ if $\hat{m}\notin\ms{\alpha}$.
\end{proof}
The following Corollary shows the relationship between the generative reasoning model $p(L,M,D;\mu\to 1)$ and the classical consequence relation with maximal consistent sets.
\begin{corollary}\label{cor:paraconsistency}
Let $p(L,M,D;\mu\to1)$ be a generative reasoning model, and $\alpha\in L$ and $\Delta\subseteq L$ such that $\ams{\Delta}=\pams{\Delta}$. $p(\alpha|\Delta)=1$ iff $S\cent\alpha$, for all cardinality-maximal consistent subsets $S$ of $\Delta$.
\end{corollary}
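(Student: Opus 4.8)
The plan is to combine Theorem~\ref{thrm:paraconsistency} with the set-theoretic identity $\ms{\Delta}=\emptyset\Rightarrow\ams{\Delta}=\bigcup_{S\in MCS(\Delta)}\ms{S}$, mirroring exactly how Corollary~\ref{cor:consistent_reasoning} was derived from Theorem~\ref{thrm:consistency}. By Theorem~\ref{thrm:paraconsistency}, under the hypothesis $\ams{\Delta}=\pams{\Delta}$ we have
\begin{align*}
p(\alpha\mid\Delta)=\frac{\sum_{m\in\ams{\Delta}\cap\ms{\alpha}}p(m)}{\sum_{m\in\ams{\Delta}}p(m)}.
\end{align*}
First I would observe that the denominator is strictly positive: by Lemma~\ref{lemma:paraconsistency}, $\ams{\Delta}\neq\emptyset$, and by the hypothesis $\ams{\Delta}=\pams{\Delta}$ together with the definition of $\pams{\Delta}$ as a union of \emph{possible} models, every $m\in\ams{\Delta}$ satisfies $p(m)\neq0$. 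Hence the ratio is well defined and equals $1$ if and only if $\ams{\Delta}\cap\ms{\alpha}$ carries the same total mass as $\ams{\Delta}$, which — since all the relevant masses are strictly positive — happens if and only if $\ams{\Delta}\subseteq\ms{\alpha}$.

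The second step is to rewrite $\ams{\Delta}\subseteq\ms{\alpha}$ in terms of the maximal consistent subsets. By definition $\ams{\Delta}=\bigcup_{S\in MCS(\Delta)}\ms{S}$, so $\ams{\Delta}\subseteq\ms{\alpha}$ iff $\ms{S}\subseteq\ms{\alpha}$ for every $S\in MCS(\Delta)$, i.e.\ iff $S\cent\alpha$ for every cardinality-maximal consistent subset $S$ of $\Delta$. Chaining the two equivalences gives $p(\alpha\mid\Delta)=1$ iff $S\cent\alpha$ for all $S\in MCS(\Delta)$, which is the statement. Note that $MCS(\Delta)\neq\emptyset$ (the empty set is always a consistent subset), so the universally quantified condition is not vacuous in a degenerate way, and when $\Delta$ is itself consistent this reduces to $\ams{\Delta}=\ms{\Delta}$ and hence to Corollary~\ref{cor:consistent_reasoning}, as expected.

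I do not anticipate a serious obstacle here; the only point requiring care is the forward direction of the "equals $1$" argument — concluding $\ams{\Delta}\subseteq\ms{\alpha}$ from equality of the two sums. This is where strict positivity of every $p(m)$ on $\ams{\Delta}$ is essential: if some models in $\ams{\Delta}$ had zero probability the ratio could equal $1$ without the inclusion holding, which is precisely why the hypothesis $\ams{\Delta}=\pams{\Delta}$ is imposed (it plays the role that $\ms{\Delta}=\pms{\Delta}$ played in Corollary~\ref{cor:consistent_reasoning}). The remaining manipulations — reducing a union-inclusion to a family of inclusions, and rephrasing $\ms{S}\subseteq\ms{\alpha}$ as $S\cent\alpha$ — are immediate from the definitions.
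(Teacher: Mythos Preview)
Your proposal is correct and follows essentially the same route as the paper's own proof: invoke Theorem~\ref{thrm:paraconsistency}, use the hypothesis $\ams{\Delta}=\pams{\Delta}$ to ensure every $p(m)$ with $m\in\ams{\Delta}$ is nonzero so that the ratio equals $1$ iff $\ams{\Delta}\subseteq\ms{\alpha}$, and then unfold $\ams{\Delta}=\bigcup_{S\in MCS(\Delta)}\ms{S}$ to obtain $S\cent\alpha$ for all $S\in MCS(\Delta)$. Your write-up is more explicit than the paper's (e.g.\ the appeal to Lemma~\ref{lemma:paraconsistency} for $\ams{\Delta}\neq\emptyset$ and the discussion of why strict positivity is needed in the forward direction), but the argument is the same.
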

\begin{proof}
By the assumption of $\ams{\Delta}=\pams{\Delta}$, $p(m)$ is non zero, for all $m\in\ams{\Delta}$. From Theorem \ref{thrm:paraconsistency}, thus $p(\alpha|\Delta)=1$ iff $\ms{\alpha}\supseteq\ams{\Delta}$. Since $\ams{\Delta}=\bigcup_{S\in MCS(\Delta)}\ms{S}$, $p(\alpha|\Delta)=1$ iff $\ms{\alpha}\supseteq\bigcup_{S\in MCS(\Delta)}\ms{S}$. Namely, $p(\alpha|\Delta)=1$ iff $\ms{\alpha}\supseteq\ms{S}$, for all cardinality-maximal consistent subsets $S$ of $\Delta$.
\end{proof}
Note that Theorem \ref{thrm:paraconsistency} and Corollary \ref{cor:paraconsistency} no longer depend on the assumption of $\ms{\Delta}\neq\emptyset$ required in Section \ref{sec:consistency} for Theorem \ref{thrm:consistency} and Corollary \ref{cor:consistent_reasoning}. Figure \ref{fig:types} illustrates the assumption of $\ams{\Delta}=\pams{\Delta}$ for reasoning of $\alpha\in L$ from inconsistent $\Delta\subseteq L$ using the generative reasoning model $p(L,M,D;\mu\to1)$. It shows that $\Delta$ has no model satisfying all its formulas. It also shows that every model satisfying all the formulas in a cardinality-maximal consistent subset of $\Delta$ is possible.
\subsection{Empirical reasoning in impossibility}\label{sec:parapossibility}
\begin{figure}[t]
\begin{center}
 \includegraphics[scale=0.06]{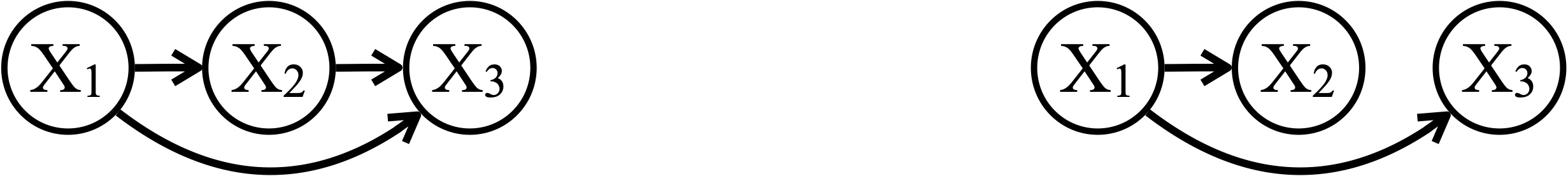}
  \caption{The left can fit with any full joint distribution. The right can fit with any full joint distribution with the conditional independence, $p(X_{3}|X_{2},X_{1})=p(X_{3}|X_{1})$, that rarely holds without data modification.}
  \label{fig:BNs1}
  \end{center}
\begin{center}
 \includegraphics[scale=0.22]{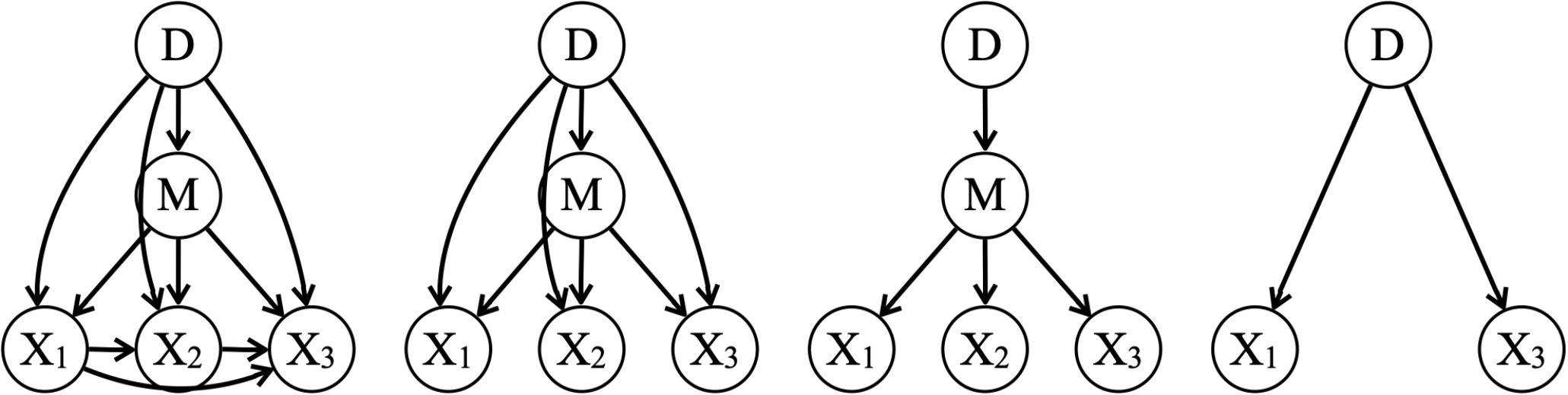}
  \caption{The rightmost structure can be derived from the leftmost one using the properties of formal logic and the natural assumption that each data supports a single model.}
  \label{fig:GLs}
  \end{center}
\end{figure}
\begin{figure*}[t]
\begin{tabular}{cc}
\begin{minipage}{.15\textwidth}
\centering
 \includegraphics[scale=0.28]{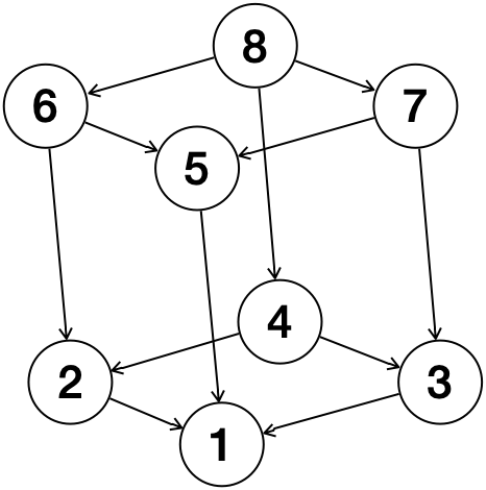}
\end{minipage}
\begin{minipage}{.85\textwidth}
\centering
\scalebox{0.9}{
\begin{tabular}{c|l|l|l}
Node ID & Reasoning type & Logical ground & Grounding assumptions\\\hline
1 & Consistent & $p(\alpha|\Delta)=1$ iff $\Delta\cent\alpha$ & $\ms{\Delta}\neq\emptyset$, $\ms{\Delta}=\pms{\Delta}$\\
2 & Possible & $p(\alpha|\Delta)=1$ iff $\Delta\ent\alpha$ & $\pms{\Delta}\neq\emptyset$\\
3 & Paraconsistent & $p(\alpha|\Delta)=1$ iff $\forall S\in MCS(\Delta). S\cent\alpha$ & $\ams{\Delta}=\pams{\Delta}$\\
4 & Parapossible & $p(\alpha|\Delta)=1$ iff $\forall S\in MPS(\Delta). S\ent\alpha$ & No assumption\\\hline
5,6,7,8 & Uncertain & $p(\alpha|\Delta)\in[0,1]$ generalises the above & Same as ID 1, 2, 3 and 4, resp.
\end{tabular}
}
\end{minipage}
\end{tabular}
\caption{The summary of the logical grounds (right). The generalisation relationship among the eight types of reasoning (left). Consistent reasoning is a special case of all the other types of reasoning, since it has the most strict assumptions. The assumptions and logical grounds of the other types of reasoning can be derived by weakening the assumptions.}
\label{fig:summary}
\end{figure*}
Theorem \ref{thrm:paraconsistency} and Corollary \ref{cor:paraconsistency} depend on the assumption of $\ams{\Delta}=\pams{\Delta}$. In this section, we cancel the assumptions to fully generalise our discussions in Section \ref{sec:paraconsistency}. The generative reasoning model $p(L,M,D;\mu\to1)$ has the following property.
\begin{theorem}\label{thrm:parapossibility}
Let $p(L,M,D;\mu\to1)$ be a generative reasoning model, and $\alpha\in L$ and $\Delta\subseteq L$.
\begin{eqnarray*}
p(\alpha|\Delta)=\frac{\sum_{m\in\pams{\Delta}\cap\pms{\alpha}}p(m)}{\sum_{m\in\pams{\Delta}}p(m)}
\end{eqnarray*}
\end{theorem}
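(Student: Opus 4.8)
The plan is to rerun the proof of Theorem~\ref{thrm:paraconsistency} essentially unchanged, replacing the set $\ams{\Delta}$ of models of the cardinality-maximal consistent subsets throughout by the set $\pams{\Delta}$ of possible models of the cardinality-maximal possible subsets, and compensating for the now-dropped hypothesis $\ams{\Delta}=\pams{\Delta}$ by one extra step that discards the impossible models up front. Concretely, I would start from $p(\alpha|\Delta)=\lim_{\mu\to1}\frac{\sum_{m}p(\alpha|m)p(m)p(\Delta|m)}{\sum_{m}p(m)p(\Delta|m)}$ and expand $p(\Delta|m)=\mu^{|\Delta|_{m}}(1-\mu)^{|\Delta|-|\Delta|_{m}}$ exactly as there. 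Because $p(m)=0$ for every impossible model, each of those summands is the zero polynomial in $\mu$, so both sums may be restricted to possible models before any limit is taken; this is the work that the hypothesis $\ams{\Delta}=\pams{\Delta}$ used to do for us.

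Next I would split the possible models into $\pams{\Delta}$ and the rest. Two observations drive the computation. First, $\pams{\Delta}\neq\emptyset$ by Lemma~\ref{lemma:parapossibility} and every $\hat m\in\pams{\Delta}$ has $p(\hat m)\neq0$, so the denominator tends to $\sum_{\hat m\in\pams{\Delta}}p(\hat m)>0$ and no division by zero can occur --- which is why, unlike Theorems~\ref{thrm:consistency} and~\ref{thrm:possibility}, the statement carries no ``undefined'' case. Second, $|\Delta|_{\hat m}$ has a single common value $c$ for all $\hat m\in\pams{\Delta}$, namely the cardinality of the cardinality-maximal possible subsets of $\Delta$: if $\hat m\in\pms{S}$ with $S\in MPS(\Delta)$ then $\hat m$ satisfies all of $S$, and it cannot satisfy any $\beta\in\Delta\setminus S$, since $\pms{S\cup\{\beta\}}\neq\emptyset$ would contradict the maximality of $S$; hence $|\Delta|_{\hat m}=|S|=c$. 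Dividing numerator and denominator by $(1-\mu)^{|\Delta|-c}$ (legitimate since $c\le|\Delta|$) then removes the $(1-\mu)$ factor from every $\pams{\Delta}$-summand and leaves every remaining possible-model summand $m\notin\pams{\Delta}$ multiplied by $(1-\mu)^{c-|\Delta|_{m}}$.

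The crux is to verify that this last exponent is strictly positive, i.e.\ that every possible $m\notin\pams{\Delta}$ makes strictly fewer than $c$ formulas of $\Delta$ true. I would argue by contradiction: if such an $m$ satisfied at least $c$ formulas of $\Delta$, set $S_m=\{\beta\in\Delta\mid m\in\ms{\beta}\}$; then $S_m$ is a possible subset of $\Delta$ (witnessed by the possible model $m$) with $|S_m|\ge c$, hence $|S_m|=c$ since no possible subset can exceed the maximal cardinality, hence $S_m$ is a cardinality-maximal possible subset, so $m\in\pms{S_m}\subseteq\pams{\Delta}$ --- a contradiction. Therefore each such summand vanishes as $\mu\to1$, exactly as $X'$ and $Z'$ do in Theorem~\ref{thrm:paraconsistency}, and passing to the limit with $\lim_{\mu\to1}\mu^{c}=1$ and $\lim_{\mu\to1}p(\alpha|\hat m)=1^{\ms{\alpha}_{\hat m}}0^{1-\ms{\alpha}_{\hat m}}$ (which equals $1$ iff $\hat m\in\ms{\alpha}$) gives $p(\alpha|\Delta)=\frac{\sum_{\hat m\in\pams{\Delta}\cap\ms{\alpha}}p(\hat m)}{\sum_{\hat m\in\pams{\Delta}}p(\hat m)}$; since every model in $\pams{\Delta}$ is possible, $\pams{\Delta}\cap\ms{\alpha}=\pams{\Delta}\cap\pms{\alpha}$, which is the asserted formula. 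I expect this combinatorial step --- identifying $c$ and showing it strictly dominates the $\Delta$-count at every other possible model --- to be the only genuine obstacle; everything else transcribes the limit argument already made for Theorem~\ref{thrm:paraconsistency}.
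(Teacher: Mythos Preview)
Your proposal is correct and follows exactly the route the paper indicates: its own proof is the one-line remark ``same as Theorem~\ref{thrm:paraconsistency}; divide models into $\pams{\Delta}$ rather than $\ams{\Delta}$,'' and you carry this out in full, including the extra step of discarding impossible models up front to compensate for dropping the hypothesis $\ams{\Delta}=\pams{\Delta}$. In fact you supply more detail than the paper does, explicitly verifying that $|\Delta|_{\hat m}$ is constant on $\pams{\Delta}$ and strictly dominates $|\Delta|_m$ for every other possible model --- points the paper leaves implicit in its reference back to Theorem~\ref{thrm:paraconsistency}.
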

\begin{proof}
Omitted due to space limitation, but same as Theorem \ref{thrm:paraconsistency}. Divide models into $\pams{\Delta}$ rather than $\ams{\Delta}$.
\end{proof}
The following Corollary shows the relationship between the generative reasoning model $p(L,M,D;\mu\to 1)$ and the empirical consequence relation with maximal possible sets.
\begin{corollary}\label{cor:parapossibility}
Let $p(L,M,D;\mu\to1)$ be a generative reasoning model, and $\alpha\in L$ and $\Delta\subseteq L$. $p(\alpha|\Delta)=1$ iff $S\ent\alpha$, for all cardinality-maximal possible subsets $S$ of $\Delta$.
\end{corollary}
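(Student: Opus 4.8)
The plan is to leverage Theorem~\ref{thrm:parapossibility} in exactly the same way Corollary~\ref{cor:paraconsistency} leverages Theorem~\ref{thrm:paraconsistency}, together with Lemma~\ref{lemma:parapossibility} to guarantee that the denominator does not vanish. First I would invoke Lemma~\ref{lemma:parapossibility} to note that $\pams{\Delta}\neq\emptyset$ for every $\Delta\subseteq L$, and observe that by the very definition of $\pams{\Delta}=\bigcup_{S\in MPS(\Delta)}\pms{S}$ every model in $\pams{\Delta}$ is possible, i.e.\ $p(m)\neq 0$ for all $m\in\pams{\Delta}$. Hence $\sum_{m\in\pams{\Delta}}p(m)>0$, so the fraction in Theorem~\ref{thrm:parapossibility} is well defined and equals $1$ if and only if its numerator equals its denominator, which happens if and only if $\pms{\alpha}\supseteq\pams{\Delta}$ (using again that every $m\in\pams{\Delta}$ is possible, so $\pams{\Delta}\cap\pms{\alpha}=\pams{\Delta}\cap\ms{\alpha}$ and membership is governed by $\pms{\alpha}$).

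Next I would unfold $\pams{\Delta}$ through its definition: $\pms{\alpha}\supseteq\pams{\Delta}$ iff $\pms{\alpha}\supseteq\bigcup_{S\in MPS(\Delta)}\pms{S}$ iff $\pms{\alpha}\supseteq\pms{S}$ for every cardinality-maximal possible subset $S$ of $\Delta$. Finally, by the \textbf{Definition of empirical consequence}, $\pms{S}\subseteq\pms{\alpha}$ is precisely $S\ent\alpha$, so the chain of equivalences closes: $p(\alpha|\Delta)=1$ iff $S\ent\alpha$ for all $S\in MPS(\Delta)$, which is the claim.

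Structurally this Corollary is the mirror image of Corollary~\ref{cor:paraconsistency}, with $\ent$ replacing $\cent$, $MPS$ replacing $MCS$, $\pams{\cdot}$ replacing $\ams{\cdot}$, and $\pms{\cdot}$ replacing $\ms{\cdot}$ throughout; the only substantive input beyond Theorem~\ref{thrm:parapossibility} is Lemma~\ref{lemma:parapossibility}, which secures the non-emptiness (and hence non-degeneracy of the denominator) unconditionally --- this is exactly why, unlike Corollary~\ref{cor:paraconsistency}, no extra grounding assumption such as $\ams{\Delta}=\pams{\Delta}$ is needed here. I do not anticipate a genuine obstacle; the one point that warrants care is the identification $\pams{\Delta}\cap\pms{\alpha}=\pams{\Delta}\cap\ms{\alpha}$ used implicitly when reading off the $=1$ condition, which holds because all models in $\pams{\Delta}$ are possible and so intersecting with $\ms{\alpha}$ versus $\pms{\alpha}$ makes no difference on that set.
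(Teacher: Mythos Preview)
Your proposal is correct and follows essentially the same route as the paper's own proof: apply Theorem~\ref{thrm:parapossibility}, read off $p(\alpha|\Delta)=1$ iff $\pams{\Delta}\subseteq\pms{\alpha}$, unfold $\pams{\Delta}=\bigcup_{S\in MPS(\Delta)}\pms{S}$, and conclude $S\ent\alpha$ for all $S\in MPS(\Delta)$. Your explicit invocation of Lemma~\ref{lemma:parapossibility} and your remark that every $m\in\pams{\Delta}$ is possible (so the denominator is positive and $\pams{\Delta}\cap\pms{\alpha}=\pams{\Delta}\cap\ms{\alpha}$) supply justifications the paper leaves implicit, but the argument is the same.
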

\begin{proof}
From Theorem \ref{thrm:parapossibility}, $p(\alpha|\Delta)=1$ iff $\pams{\Delta}\subseteq\pms{\alpha}$. Since $\pams{\Delta}=\bigcup_{S\in MPS(\Delta)}\pms{S}$, $p(\alpha|\Delta)=1$ iff $\bigcup_{S\in MPS(\Delta)}\pms{S}\subseteq\pms{\alpha}$. Therefore, $p(\alpha|\Delta)=1$ iff $\pms{S}\subseteq\pms{\alpha}$, for all $S\in MPS(\Delta)$.
\end{proof}
Note that Theorem \ref{thrm:parapossibility} and Corollary \ref{cor:parapossibility} no longer depend on the assumption of $\ams{\Delta}=\pams{\Delta}$ required in Section \ref{sec:paraconsistency} for Theorem \ref{thrm:paraconsistency} and Corollary \ref{cor:paraconsistency}. Figure \ref{fig:types} illustrates reasoning of $\alpha\in L$ from impossible $\Delta\subseteq L$ using the generative reasoning model $p(L,M,D;\mu\to1)$. It illustrates the most general situation without the assumptions discussed in the previous sections.
\subsection{Probabilistic reasoning}\label{sec:probability}
Let $X_i$ (for $i=1,2,3$) represent three binary random variables corresponding to the propositions `it is raining outside', `the grass is wet', and `the outside temperature is high', respectively. The lower case of each random variable represents its realisation. What one needs in most cases is a posterior probability. For example, $p(x_{1}|x_{3})$ can be represented as follows.
{\small
\begin{align*}
p(x_{1}|x_{3})=\frac{p(x_{1},x_{3})}{p(x_{3})}=\frac{\bm{\sum}_{x_{2}}p(x_{1},x_{2},x_{3})}{\bm{\sum}_{x_{1},x_{2}}p(x_{1},x_{2},x_{3})}
\end{align*}
}
This equation shows that the full joint distribution is required for the exact posterior probability. The space complexity of the full joint distribution is $O(2^N)$ where $N$ is the number of propositions. Thus, the calculation of a posterior probability is generally intractable. \citep{pearl:03} tackled this issue by incorporating the idea of independence. For example, if $X_{3}$ is assumed to be conditionally independent of $X_{2}$ given $X_{1}$, the above equation can be simplified as follows (see Figure \ref{fig:BNs1}).
{\small
\begin{align*}
p(x_{1}|x_{3})&=\frac{\bm{\sum}_{x_{2}}p(x_{3}|x_{2},x_{1})p(x_{2}|x_{1})p(x_{1})}{\bm{\sum}_{x_{1},x_{2}}p(x_{3}|x_{2},x_{1})p(x_{2}|x_{1})p(x_{1})}\\
&=\frac{\bm{\sum}_{x_{2}}p(x_{3}|x_{1})p(x_{2}|x_{1})p(x_{1})}{\bm{\sum}_{x_{1},x_{2}}p(x_{3}|x_{1})p(x_{2}|x_{1})p(x_{1})}
\end{align*}
}
The assumption of independence reduces a space complexity. However, those who strictly adhere to data should not accept the assumption. This is because the assumption rarely holds in reality without a modification of original data or resort to expert knowledge.
\par
Now, let $p(L,M,D;\mu)$ be a generative reasoning model where $L$ is built with $X_{i}$ (for $i=1,2,3$). The posterior probability $p(x_{1}|x_{3})$ can be naively represented as follows (see the leftmost graph in Figure \ref{fig:GLs}).
{\small
\begin{align*}
&p(x_{1}|x_{3})=\frac{p(x_{1},x_{3})}{p(x_{3})}=\frac{\bm{\sum}_{x_{2},m,d}p(x_{1},x_{2},x_{3},m,d)}{\bm{\sum}_{x_{1},x_{2},m,d}p(x_{1},x_{2},x_{3},m,d)}=\\
&\frac{\bm{\sum}_{x_{2},m,d}p(x_{3}|x_{2},x_{1},m,d)p(x_{2}|x_{1},m,d)p(x_{1}|m,d)p(m|d)p(d)}{\bm{\sum}_{x_{1},x_{2},m,d}p(x_{3}|x_{2},x_{1},m,d)p(x_{2}|x_{1},m,d)p(x_{1}|m,d)p(m|d)p(d)}
\end{align*}
}
From Equations (\ref{eq:1}) and (\ref{eq:2}), the above equation can be simplified as follows (see the second and third graphs).
{\small
\begin{align*}
&=\frac{\bm{\sum}_{x_{2},m,d}p(x_{3}|m)p(x_{2}|m)p(x_{1}|m)p(m|d)p(d)}{\bm{\sum}_{x_{1},x_{2},m,d}p(x_{3}|m)p(x_{2}|m)p(x_{1}|m)p(m|d)p(d)}
\end{align*}
}
Each datum has an equal probability, and it supports a single model, i.e., $p(m|d)=1$ if $m=m(d)$ and $p(m|d)=0$ otherwise. Thus, the above equation can be simplified as follows (see the rightmost graph).
{\small
\begin{align*}
&=\frac{\bm{\sum}_{x_{2},d}p(x_{3}|m(d))p(x_{2}|m(d))p(x_{1}|m(d))}{\bm{\sum}_{x_{1},x_{2},d}p(x_{3}|m(d))p(x_{2}|m(d))p(x_{1}|m(d))}\\
&=\frac{\sum_{d}p(x_{3}|m(d))p(x_{1}|m(d))}{\sum_{d}p(x_{3}|m(d))}
\end{align*}
}
\par
Is the outcome reasonable? Let $m_{n}$ denote the $n$th model. Given $p(L,M,D;\mu=1)$, we have 
{\small
\begin{align*}
\textstyle{p(m_{n})=\sum_{k=1}^{K}p(m_{n}|d_{k})p(d_{k})=\frac{1}{K}\sum_{k=1}^{K}p(m_{n}|d_{k})=\frac{K_{n}}{K}},
\end{align*}
}
where $K_{n}$ is the number of data in $m_{n}$. Therefore, $p(m_{n})=K_{n}/K$ is equivalent to the maximum likelihood estimate (MLE). Moreover, we have
{\small
\begin{align*}
p(\alpha|\Delta)=\frac{p(\alpha,\Delta)}{p(\Delta)}=\frac{\sum_{n=1: m_{n}\in\ms{\alpha}\cap\ms{\Delta}}^{N}p(m_{n})}{\sum_{n=1: m_{n}\in\ms{\Delta}}^{N}p(m_{n})}.
\end{align*}
}
The denominator (resp. numerator) is thus the sum of the MLE of the models where $\Delta$ (resp. $\alpha$ and $\Delta$) is true. The space complexity is now $O(K)$ where $K$ is the number of data. It is not reasoning using the full joint distribution over models. The equation ignores all models without data support. The exponential summation over models thus can be reduced to the liner summation over data.
\section{Conclusions}
Symbolic knowledge is an abstraction of data. This simple idea caused a probabilistic model of how data cause symbolic knowledge in terms of its satisfiability in formal logic. Figure \ref{fig:summary} summarises the logical grounds and assumptions of all the types of logical reasoning studied in this paper.
\bibliography{btx_kido}
\end{document}